\newcolumntype{C}[1]{>{\centering\arraybackslash}p{#1}}
\newcommand{\chd}{\ensuremath{c_{hd}}\xspace} 
\newcommand{\cld}{\ensuremath{c_{ld}}\xspace}  
\newcommand{\mld}{\ensuremath{M_{ld}}\xspace} 
\newcommand{\mhd}{\ensuremath{M_{hd}}\xspace} 
\newcommand{\tld}{\ensuremath{\tau_{ld}}\xspace} 
\newcommand{\thd}{\ensuremath{\tau_{hd}}\xspace} 
\newcommand{\rs}{\mathsf{rs}} 
\newcommand{\rt}{\mathsf{rt}} 
\newcommand{\irs}{\mathsf{irs}} 
\newcommand{\irt}{\mathsf{irt}} 
\newcommand{\bgamma}{\bar{\gamma}} 
\newcommand{\prob}{\operatorname{Pr}} 
\newcommand{\DS}{\ensuremath{D_0}\xspace} 
\newcommand{\Di}{\ensuremath{D_i}\xspace} 
\begin{document}

\title{Bridging Dimensions: Confident Reachability for High-Dimensional Controllers}
\author{Anonymous Authors}
\institute{Anonymous Institution}
\titlerunning{Bridging Dimensions: Confident Reachability for HD Controllers}

\vspace{-4mm}

\author{Yuang Geng \and Jake 
Baldauf \and
Souradeep Dutta \and Chao Huang \and
Ivan Ruchkin}
\authorrunning{Geng et al.}
%

\institute{University of Florida, FL, USA, \email{yuang.geng@ufl.edu}  \\
\and
University of Florida, FL, USA, \email{jakebaldauf@ufl.edu}  \\
\and
University of Pennsylvania, PA, USA, \email{duttaso@seas.upenn.edu} \\
\and
University of Southampton, UK, \email{chao.huang@soton.ac.uk} \\
\and
University of Florida, FL, USA,
\email{iruchkin@ece.ufl.edu} 
}

%
%

%
\maketitle              
\vspace{-6mm}
\begin{abstract}
Autonomous systems are increasingly implemented using end-to-end learning-based controllers. Such controllers make decisions that are executed on the real system, with images as one of the primary sensing modalities. Deep neural networks form a fundamental building block of such controllers. Unfortunately, the existing neural-network verification tools do not scale to inputs with thousands of dimensions --- especially when the individual inputs (such as pixels) are devoid of clear physical meaning. This paper takes a step towards connecting exhaustive closed-loop verification with high-dimensional controllers. Our key insight is that the behavior of a high-dimensional controller can be approximated with several low-dimensional controllers.
To balance the approximation accuracy and verifiability of our low-dimensional controllers, we leverage the latest verification-aware knowledge distillation. Then, we inflate low-dimensional reachability results with statistical approximation errors, yielding a high-confidence reachability guarantee for the high-dimensional controller. We investigate two inflation techniques --- based on trajectories and control actions --- both of which show convincing performance in three OpenAI gym benchmarks. 

\vspace{-1mm}

\looseness=-1
\keywords{reachability, neural-network control, conformal prediction}
\end{abstract}
\vspace{-3mm}

\vspace{-7mm}
\section{Introduction}
\vspace{-1mm}

End-to-end deep neural network controllers have been extensively used in executing complex and safety-critical autonomous systems in recent years~\cite{codevilla_end--end_2018,matsumoto_end--end_2020,topcu_assured_2020,teeti_vision-based_2022}.  In particular,\textit{ high-dimensional controllers} (HDCs) based on images and other high-dimensional inputs have been applied in areas such as autonomous car navigation~\cite{third_eye,deeproad} and aircraft landing guidance~\cite{NNVerifier}. For example, recent work has shown the high performance of controlling aircraft to land on the runway with a vision-based controller~\cite{vision_landing}. For such critical applications, it is important to develop techniques with strong safety guarantees for HDC-controlled systems. 

\looseness=-1
However, due to the high-dimensional nature of the input space, modern verification cannot be applied directly to systems controlled by HDCs~\cite{closed_analysis_vision,CORA}. Current closed-loop verification tools, such as  NNV~\cite{nnv}, Verisig~\cite{verisig}, Sherlock \cite{sherlock}, and ReachNN*~\cite{reachnn}, are capable of combining a dynamical system and a \textit{low-dimensional controller} (LDC) to verify a safety property starting from an initial region of the low-dimensional input space, such as position-velocity states 
of a car. DeepReach~\cite{deepreach} has pushed the boundary of applying Hamilton-Jacobi (HJ) reachability to systems with tens of state dimensions. 
However, such verification tools fail to scale for an input with thousands of dimensions (e.g., an image). One issue is that the dynamics of these dimensions are impractical to describe. Furthermore, the structure of an HDC is usually more complicated than that of an LDC, with convolution and pooling layers. For example, an image-based HDC may have hundreds of layers with thousands of neurons, whereas an LDC usually contains several layers with dozens of neurons, making HDC verification difficult.

\looseness=-1
To deal with these challenges, researchers have built abstractions of high-dimensional perception into the verification process. One work~\cite{gan} verified a generative adversarial network (GAN) that creates images from states. 
 Such methods cannot guarantee the GAN's accuracy or relation to reality, which becomes a major falsifiable assumption of their verification outcomes. Another work~\cite{NNVerifier} built a precise mathematical model capturing the exact relationship between states and image pixels to verify the image-based controller, which is effortful and needs to be redone for each system. Inspired by previous work on decreasing the dimensions, we skillfully create verifiable low-dimensional controllers from high-dimensional ones. 

\looseness=-1
This paper proposes an \textbf{end-to-end methodology} to verify systems with HDCs by employing the steps displayed in Fig.~\ref{fig:high-level}. Instead of verifying an HDC's safety directly over a complicated input space, our key idea is to approximate it with several LDCs so that we can reduce the HDC reachability problem to several LDC reachability problems. A crucial step is to upper-bound the difference between LDC and HDC, which we do statistically. Finally, we extend the reachable sets with the statistical bounds to obtain a safety guarantee for the HDC. 

\begin{figure*}[t]
\centerline{\includegraphics[width=\columnwidth]{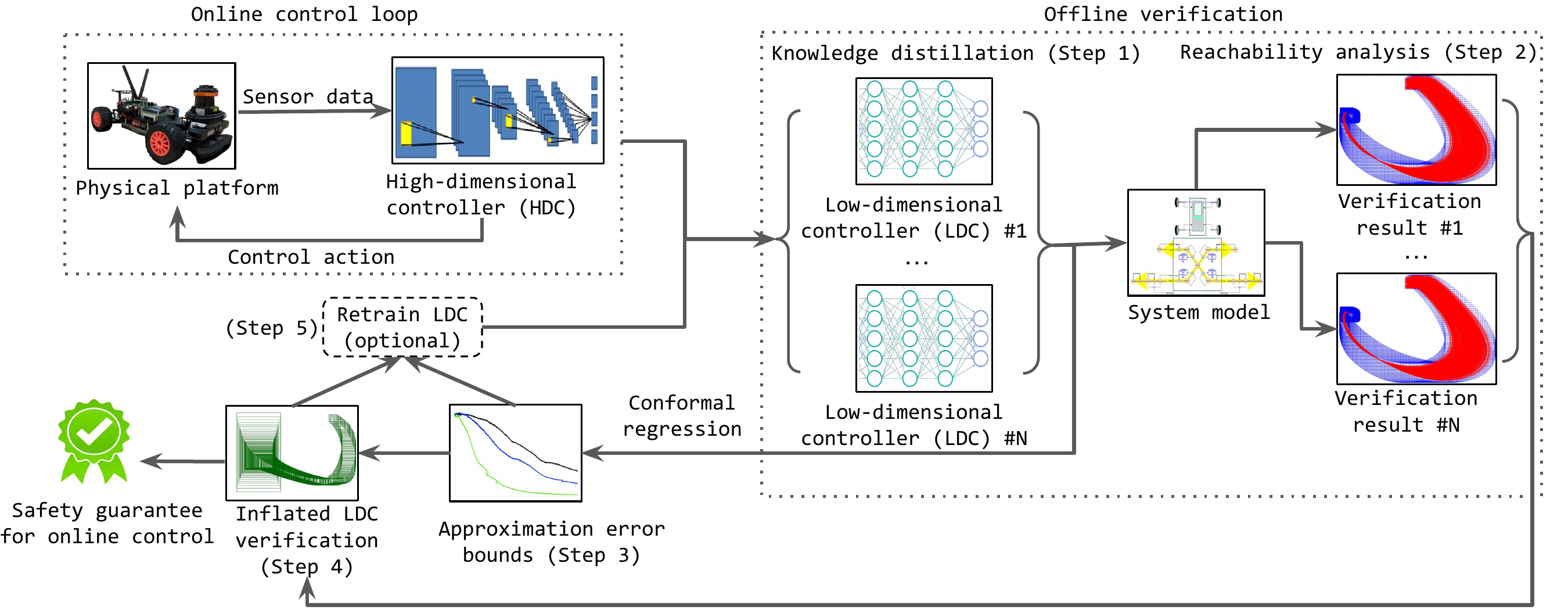}}
\vspace{-4mm}
\caption{Our verification approach for systems with high-dimensional controllers. 
}
\vspace{-7mm}
\label{fig:high-level}
\end{figure*}

\looseness=-1
Since the input space and structure of the HDC are too complex to verify, we leverage \textit{knowledge distillation}~\cite{distill}---a model compression method---to train simplified ``student models'' (LDC) based on the information from the sophisticated ``teacher model'' (HDC). This training produces an LDC that is lightweight and amenable to closed-loop verification because it operates on dynamical states, not images. Moreover, due to the importance of the Lipschitz to minimizing the overapproximation error~\cite{lower_lips,reachnn}, our methodology adopts \textit{two-objective gradient descent}~\cite{Fan_kd}, decreasing both the approximation error and Lipschitz constant.

After training the LDCs, we calculate the statistical upper bound of the discrepancy between the two controllers, since obtaining the true discrepancy is impractical. To this end, we rely on \textit{conformal prediction}~\cite{CP_lemma1,CP,shafer_tutorial_2008}, one of the cutting-edge statistical methods to provide a lower bound of the confidence interval for prediction residuals without distributional assumptions or explicit dependency on the sample count. We propose \textit{two conformal techniques} to quantify the difference between HDC- and LDC-controlled systems, by bounding: (i) the distance between their trajectories, and (ii) the difference between the actions produced by the HDC and LDC. We inflate reachable sets of the LDC system based on both bounds to obtain safety guarantees on the HDC system.


We evaluate our methodology on three popular verification case studies in OpenAI Gym~\cite{brockman_openai_2016}: inverted pendulum, mountain car, and cartpole. 
While our results are promising, verifying HDCs remains a challenging task with many improvement opportunities. The contributions of our work are three-fold:
\begin{enumerate} 
    \item Two verification approaches for high-dimensional controllers that combine reachability analysis and statistical inference to provide a safety guarantee for systems controlled by neural networks with thousands of inputs. 
    \item A novel neural-network approximation technique for training multiple LDCs that collectively mimic an HDC and reduce overapproximation error.
    \item An implementation and evaluation of our verification approaches on three case studies: inverted pendulum, mountain car, and cartpole.
\end{enumerate}

Sec.~\ref{sec:background} introduces the background and defines our verification problem. Sec.~\ref{sec:approach} describes the details of our verification approach, which is evaluated in Sec.~\ref{sec:evaluation}. Finally, we review the related work in Sec.~\ref{sec:relwork} and conclude the paper in Sec.~\ref{sec:conclusion}.

\vspace{-3mm}
\section{Background and Problem Setting} \label{sec:background}
\vspace{-2mm}

\looseness=-1
\textbf{High- and low-dimensional systems.}
The original \textit{high-dimensional closed-loop system} is a tuple $\mhd= (S, Z, U, s_0, f, \chd, g)$. Here, the $S$ is the state space, $Z$ is the high-dimensional sensor space of so-called ``images'' (e.g., camera images or LIDAR scans), and the $U$ is the control action space, $s_0$ is the initial state, $f:S \times U \rightarrow S$ is the dynamics, 
and
$\chd: Z \times S \rightarrow U$ is the HDC. Note that \chd only uses a \textit{subset} of state dimensions as input (e.g., a convolutional neural network with image and velocity inputs, but not position), getting the rest of the information from the image. 

For mathematical convenience, we also define an (unknown) deterministic state-to-image generator as $g: S \rightarrow Z$ and the role and assumptions of generator $g$ are stated below. As a verifiable approximation of \mhd, our \textit{low-dimensional closed-loop system} is defined as  $\mld = (S, U, s_0, f, \cld)$. Both \mhd and \mld have the same state space and action space. The only difference is that the $M_{ld}$ has a low-dimensional controller $\cld: S \rightarrow U$, which operates on the exact states.


\noindent
\textbf{System execution.}
The execution of $M_{hd}$  starts from the initial state $s_0$. Next, an image $z$ can be generated by image generator $g$ from that state. Then it is fed into \chd to obtain a corresponding control action $u = \chd(z)$, which is used to update the state via dynamics $f$. For $M_{ld}$, the execution proceeds similarly, except that the current state $s$ directly results in a control action $u = \cld(s)$. Thus, we denote the \textit{state at time} $t$ starting from $s_0$ executed by $\mhd$ or $\mld$  as $\varphi_{hd}(s_0, t)$ and $\varphi_{ld}(s_0, t)$ respectively. The  \textit{trajectory} of \mhd is defined as a state sequence: $\tau_{hd}(s_0, T) = [s_0,  \varphi_{hd}(s_0, 1), \dots,  \varphi_{hd}(s_0, T)],$ and similarly for $\tau_{hd}$. 

Based on previous background, we define reachable sets and tubes:
\begin{definition}[Reachable set]
\label{def:reachable_set} Given an initial set $S_0$ and an integer time $t$, a \emph{reachable set} $\rs_M(S_0, t)$ for (either) system $M$ contains all the states that can be reached from $S_0$ in $t$ steps: $\rs_M(S_0, t) = \{\varphi_M(s_0, t) \mid \forall s_0 \in S_0\}$.
\end{definition}

\begin{definition}[Reachable tube] Given an initial set $S_0$ and time horizon $T$, a \emph{reachable tube} $\rt_M (S_0, T)$ for (either) system $M$ is a sequence of all the reachable sets from $S_0$ until time $T$: $\rt_M (S_0, T) = [S_0, \rs_M(S_0,1),..., \rs_M(S_0, T)]$.
\end{definition}

\vspace{-2mm}
\looseness=-1
\noindent
\textbf{Assumptions on image-state mapping $g$.} 
Our key challenge is establishing a mapping between the high-dimensional image space $Z$ and the low-dimensional state space $S$. Our verification methodology is based on the existence of a deterministic image generator $g$ that is part of \mhd. This generator is the true and \textit{unknown} mechanism that creates images from states (e.g., a camera system). We do \textit{not} assume or use an analyzable closed-form description of $g$. We also do not assume or verify any perception model (which obtains states from images).

Our only use of $g$ is in the training process for a limited dataset: we assume that we can ``lab-study'' an instrumented system \mhd (e.g., with extra positioning sensors or human annotations) to label each image $z$ with a corresponding low-dimensional state $s$. In practice, this data collection would correspond to executing an autonomous system inside a controlled test facility. It's crucial to clarify that we are not using $g$ to generate any images from any states. Instead, we solely produce a restricted state-image paired dataset for training LDC.

To check the robustness of the above assumption in realistic scenarios, we will perform a sensitivity analysis by adding zero-mean Gaussian noise to the state-image mapping. 
For simplicity, this noise is added to the true state before generating an image from it, instead of adding pixel noise that may result in unrealistic images. The results of this evaluation will be discussed in Sec.~\ref{sec:evaluation}. 



\looseness=-1
\noindent
\textbf{Verification problem}.
Our problem is to guarantee that the high-dimensional system $M_{hd}$ reaches the goal set $G$ from an initial set $S_0$ within time $T$. 
To this end, we aim to compute reachable sets of the high-dimensional system \mhd and intersect them with the goal set to obtain the verification verdict. 
Set $G$ is specified in low dimensions (i.e., using physical variables); however, the $M_{hd}$ behavior is determined by the images from generator $g$ and the HDC's response to them. 


\looseness=-1
Thus, given initial set $S_0$, goal set $G$, 
system \mhd, and time horizon $T$, \ul{our goal is to verify this assertion}:
\begin{align}\label{eq:verif-problem}
\begin{split}
    \forall s_0 \in S_0 ~\cdot~ &\rs_{\mhd}(S_0, T) \subseteq  G 
\end{split}
\end{align}

\looseness=-1
This problem can be divided into two parts
: (a) approximating \mhd with low-dimensional systems $\mld^1, \dots, \mld^n$ and verifying them; (b)  combining these reachability results based on the approximation error bounds into a reachability verdict to solve the above \mhd problem with statistical confidence.

\vspace{-3mm}
\section{Verification of High-Dimensional Systems}\label{sec:approach}

\looseness=-1

\vspace{-1mm}
Considering the challenges of complex structure and dynamics of high-dimensional systems, and the difficulties of defining safety in high dimensions, our end-to-end approach is structured in \underline{five steps}: (1) train low-dimensional controller(s), (2) perform reachability analysis on them, (3) compute statistical discrepancy bounds between high- and low-dimensional controllers, (4) inflate the reachable tubes from low-dimensional verification with these bounds, and (5) combine the verification results and repeat the process as if needed on different states/LDCs. 

\vspace{-1mm}
\subsection*{Step 1: Training low-dimensional controllers}
Given the aforementioned challenges of directly verifying \mhd, we plan to first verify the behavior in the low dimensions according to \mld. Hence, we train a \cld to imitate the performance of \chd starting from a given state region, which serves as an input to Step 1 (our first iteration uses the full initial state region $S_0$ to train one \cld). 
As a start, we collect the training data for \cld: given the \chd, access to image generator $g$, and the initial state space region $S_0$, we construct a supervised training dataset $\mathcal{D}_{tr} = \big\{\big(\tau_{hd}(s_i,T), (u_1, ..., u_T)_i\big)\big\}_{i=1}^m$ by sampling the initial states $s_i  \sim D_0$ from some given distribution $D_0$ (in practice, $D_0 = \operatorname{Uniform}(S_0)$).

Training a verifiable LDC has \ul{two conflicting objectives}. On the one hand, we want to approximate the given \chd with minimal Mean Squared Error (MSE) on $\mathcal{D}_{tr}$. On the other hand, neural networks with smaller Lipschitz constants (Def.~\ref{def:lp_constant} in the Appendix) are more predictable and verifiable~\cite{szegedy_intriguing_2014,fazlyab_efficient_2019,combettes_lipschitz_2020}.  

We balance the ability of the \cld to mimic the \chd and the verifiability of \cld by using a recent \textit{verification-aware knowledge distillation technique}~\cite{Fan_kd}. Originally, this method was developed to compress low-dimensional neural networks for better verifiability --- and we extend it to approximate an HDC with LDCs using the supervised dataset $\mathcal{D}_{tr}$.  Specifically, we implement knowledge distillation with \textit{two-objective gradient descent}, which aims to optimize the MSE loss function $L_{mse}$ and Lipschitz constant loss function $L_{lip}$. First, it computes the directions of two gradients with respect to the \cld parameters $\theta$:
\begin{align}
d_{L_{mse}} = \frac{\partial L_{mse}}{\partial \theta}, \quad d_{L_{lip}} = \frac{\partial L_{lip}}{\partial \theta}
\end{align}

\looseness=-1
The two-objective descent operates case-by-case to optimize at least one objective as long as possible. If $d_{L_{mse}} \cdot d_{L_{lip}} > 0 $, the objectives can be optimized simultaneously by following the direction of the angular bisector of the two gradients. If $d_{L_{mse}} \cdot d_{L_{lip}} < 0$, then it is impossible to improve both objectives. Then, weights are updated along the vector of $d_{L_{mse}}$ (the higher priority) projected onto the hyperplane perpendicular to $d_{L_{lip}}$. The thresholds for MSE and Lipschitz constants in our system \mld are denoted as $\epsilon$ and $\lambda$ respectively. The stopping condition is met when both loss functions are below their thresholds or the training time exceeds the limit. Later on, Step 1 will be referred to with function \textsc{TrainLDC}, and our way of tuning $\epsilon$ and $\lambda$ will be described later in Step 5.


\subsection*{Step 2: Reachability analysis in low dimensions }


After training LDCs $\{\cld^1,...,\cld^m \}$, we construct overapproximate reachable tubes for each. We perform reachability analysis for systems $\mld^1, \dots, \mld^m$ with the respective controllers and the initial set $S_0$ specified in the original verification problem. This will result in a set of reachable tubes $\rt_{\mld^1}(S_0, T), \dots, \rt_{\mld^m}(S_0, T)$.

To carry out the reachability analysis, we use the \textit{POLAR\footnote{\url{https://github.com/ChaoHuang2018/POLAR_Tool}} toolbox}, version of December 2022~\cite{polar,polar23} --- an implementation that computes univariate Bernstein polynomials to overapproximate activation functions in \cld, and then selectively uses Taylor or Bernstein polynomials for tight overapproximation of \cld. For dynamics reachability, which alternates with neural-network overapproximation, the POLAR toolbox relies on the mature Flow* tool with Taylor model approximations~\cite{flow}. The latest experimental results~\cite{polar23} show that POLAR outperforms other neural-network verification tools in terms of both computational efficiency and tightness. The verification details are formalized in Algorithm~\ref{alg:overall} in Step 5.


\vspace{-2mm}
\subsection*{Step 3a: Defining discrepancy bounds}
 
\looseness=-1
The LDC reachable tubes from Step 2 cannot be used directly to obtain HDC guarantees because of the discrepancy between LDC and HDC behaviors, which inevitably arises when compressing a higher-parameter neural network~\cite{kd_survey}. 
Therefore, we will quantify the difference between LDCs and HDCs using \textit{discrepancy functions}, inspired by the prior work on testing hybrid systems~\cite{fan_bounded_2015,fan_dryvr_2017,qin-conformaltesting}. We introduce and investigate \ul{two types of discrepancy functions} in our setting:


\smallskip
\noindent
 \textbf{1. Trajectory-based discrepancy} $\beta$ considers the difference between the HDC and LDC \textit{trajectories} starting from a \textit{matched} state-image pair $(s,z)$, i.e., $z = g(s)$. It is defined as the least upper bound on the maximum L1 distance between two trajectories, i.e., $\| \thd(s_0,T) - \tld(s_0,T) \|_1$, over time $T$ for all initial states $s_0$ within the initial set $S_0$. Therefore, each initial set $S_0$ gives rise to its trajectory-based discrepancy $\beta(S_0)$.

\smallskip
\noindent
\textbf{2. Action-based discrepancy} $\gamma$ considers the difference between LDC and HDC \textit{actions} on a \textit{matched} state-image pair $(s,z)$, i.e., $z = g(s)$.
Similarly to the above, it is defined as the least upper bound on the difference between control actions over time horizon $T$ starting from any initial state $s_0$ within the initial set $S_0$. Note that the control difference, $\|\chd(g(s_{hd}^t)) - \cld(s_{ld}^t)\|_1$, is considered at each time step, where the $s$ is each state in the two trajectories. 

The formal Definitions~\ref{def:real_trajectory_dis} and~\ref{def:real_action_dis} can be found in the Appendix.



\subsection*{Step 3b: Computing statistical discrepancy bounds}

\looseness=-1
Unfortunately, obtaining the true discrepancies is impractical: it would require solving optimization/feasibility problems in high-dimensional image spaces. Instead, we calculate the statistical upper bounds for these discrepancies via \textit{conformal prediction}, which is a distribution-free statistical technique to provide probabilistically valid uncertainty regions for complex prediction models --- without strong assumptions about these models or their error distributions~\cite{vovk_algorithmic_2005}. 

\looseness=-1
Below we briefly summarize basic conformal prediction. Consider $k+1$ independent and identically distributed random variables $\Delta, \Delta^1,..., \Delta^k$, also known as \textit{non-conformity scores}. Conformal prediction computes an uncertainty region for $\Delta$ via a function $\bar{\Delta}: \mathbb{R}^k \rightarrow \mathbb{R}$ from the other $k$ values. Given a failure probability $\alpha \in (0, 1)$, conformal prediction provides an uncertainty bound on $\bar{\Delta}$ such that $\prob(\Delta \leq \bar{\Delta}$) $\geq 1 - \alpha$.  This is performed with a surprisingly simple quantile argument, where the uncertainty bound $\bar{\Delta}$ is calculated as the $(1-\alpha)$-th
quantile of the empirical distribution over the values of $\Delta^1, \Delta^2,..., \Delta^k,$ and $\infty$. The guarantee is formalized in the lemma below, and for details see a popular tutorial~\cite{shafer_tutorial_2008}. 

\begin{lemma} (Lemma 1 in \cite{CP_lemma1})\label{lemma:org-cp}
    Let $\Delta, \Delta^1, \Delta^2,..., \Delta^k$ be k+1 independent identically distributed real-valued random variables. Without loss of generality, let $\Delta, \Delta^1, \Delta^2,..., \Delta^k$ be stored in non-decreasing order and define $\Delta^{k+1} \coloneqq \infty$. For $\alpha \in (0, 1)$, it holds that $\prob(\Delta\leq \bar{\Delta}$) $\geq 1 - \alpha$ where
    $\bar{\Delta} \coloneqq \Delta^{(r)}$, which is the $r$-ranked variable with $r = \lceil (k+1)(1-\alpha) \rceil$, and $\lceil . \rceil$ is the ceiling function.
\end{lemma}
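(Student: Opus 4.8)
The plan is to exploit the exchangeability of the $k+1$ i.i.d.\ random variables $\Delta, \Delta^1, \dots, \Delta^k$ and reduce the probabilistic claim to a statement about the rank of $\Delta$ among the whole collection. First I would make the simplifying (and harmless) assumption that the variables are almost surely distinct --- ties can be broken by adding an infinitesimal independent perturbation, or handled by noting that ties only make the target event more likely, so the bound is preserved. Under this assumption, consider the $k+1$ values jointly and let $R \in \{1, \dots, k+1\}$ be the rank of $\Delta$ when all of $\Delta, \Delta^1, \dots, \Delta^k$ are sorted in non-decreasing order. Because the joint distribution is exchangeable, every one of the $(k+1)!$ orderings is equally likely, and in particular $R$ is uniformly distributed on $\{1, \dots, k+1\}$: $\prob(R = j) = \tfrac{1}{k+1}$ for each $j$.

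Next I would connect this rank to the event $\{\Delta \le \bar\Delta\}$. Writing $\Delta^{(1)} \le \Delta^{(2)} \le \dots \le \Delta^{(k)}$ for the order statistics of the \emph{calibration} sample $\Delta^1, \dots, \Delta^k$ alone (and recalling $\Delta^{(k+1)} \coloneqq \infty$), the definition gives $\bar\Delta = \Delta^{(r)}$ with $r = \lceil (k+1)(1-\alpha)\rceil$. The key observation is that $\Delta \le \Delta^{(r)}$ holds if and only if $\Delta$ is not among the $k+1-r$ \emph{largest} values of the full collection --- equivalently, the rank $R$ of $\Delta$ in the joint ordering is at most $r$. (When $r = k+1$ this is trivially true since $\bar\Delta = \infty$.) Hence
\begin{equation}
\prob(\Delta \le \bar\Delta) = \prob(R \le r) = \frac{r}{k+1} = \frac{\lceil (k+1)(1-\alpha)\rceil}{k+1} \ge \frac{(k+1)(1-\alpha)}{k+1} = 1-\alpha,
\end{equation}
which is exactly the claimed bound.

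The only genuinely delicate point --- the ``main obstacle'' --- is the careful bookkeeping around ties and the off-by-one in the rank argument: one must verify that $\{\Delta \le \Delta^{(r)}\}$ corresponds to $\{R \le r\}$ rather than $\{R \le r-1\}$ or $\{R < r\}$, and that the presence of $\infty$ as a padding value does not corrupt the count. I would handle this by arguing first in the distinct-values case where the correspondence is a clean combinatorial identity, then observing that allowing ties can only enlarge the event $\{\Delta \le \bar\Delta\}$ (since $\Delta$ tying with some $\Delta^i \le \bar\Delta$ still satisfies the inequality), so the lower bound $1-\alpha$ continues to hold. Everything else is the elementary computation $\lceil (k+1)(1-\alpha)\rceil / (k+1) \ge 1-\alpha$, which follows directly from $\lceil x \rceil \ge x$.
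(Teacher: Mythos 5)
Your proof is correct. Note that the paper does not prove this lemma at all --- it is imported verbatim from the cited work (Lemma 1 of the conformal prediction reference) and used as a black box --- so there is no internal proof to compare against; your exchangeability argument (uniform rank of $\Delta$ among the $k+1$ exchangeable values, the identification of $\{\Delta \le \Delta^{(r)}\}$ with $\{R \le r\}$, the trivial case $r = k+1$ where $\bar\Delta = \infty$, and the observation that ties only enlarge the coverage event) is exactly the standard proof from the conformal prediction literature, with the off-by-one bookkeeping done correctly, since in the distinct-values case $R = 1 + \#\{i : \Delta^i < \Delta\}$ gives $\prob(\Delta \le \Delta^{(r)}) = r/(k+1) \ge 1-\alpha$.
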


\looseness=-1
Leveraging conformal prediction, we define the \ul{statistical versions} of our discrepancy functions. For the trajectory-based one, we define the non-conformity as the maximum L1 distance between states at the same time in two matched trajectories $\tld(s_0, T)$ and $\thd(s_0, T)$ starting from a random state $s_0 \sim D_0$ sampled independently and identically distributed (i.i.d.) from a given distribution $D_0$ over the initial region $S_0$, similar to recent works~\cite{cleaveland-traj,qin-conformaltesting}. 
This leads to a trajectory dataset $\mathcal{D}_{tb}$, from which $k$ non-conformity scores are calculated. 


\begin{definition}[Statistical trajectory-based discrepancy]
\looseness=-1
Given 
distribution $D_0$ over $S_0$, confidence $\alpha \in (0, 1)$, and state functions $\varphi_{hd}(s, t)$ and $\varphi_{ld}(s, t)$ for systems $\mhd$ and $\mld$, a \emph{statistical trajectory-based discrepancy} $\bar{\beta}(D_0)$ is an $\alpha$-confident upper bound on the max trajectory distance starting from $s_0\sim D_0$:  
{\small
\begin{align*}&\prob_{s_0 \sim D_0}\Big[   \max_{t=0..T} \| \varphi_{hd}(s_0, t) -\varphi_{ld}(s_0, t) \|_1 \leq \bar{\beta}(D_0)\Big] \geq 1 - \alpha
\end{align*}
 }%
\end{definition}

\looseness=-1
To obtain this bound $\bar{\beta}(D_0)$, we leverage conformal prediction as follows. Dataset $\mathcal{D}_{tb}$ contains i.i.d. samples $s_1, s_2, ..., s_k$ from our chosen distribution $D_0$. In practice, we choose the uniform distribution, namely $s \sim \operatorname{Uniform}(S)$, because we value the safety of each state equally. We compute the corresponding non-conformity scores $\delta^1, \delta^2,..., \delta^k, \delta^{k+1}$ as the maximum L1 distances between the same-time states in the two trajectories over all times $t \in [0..T]$: 
  \begin{align*}
     \delta^i = \max_{t=0..T}\|\varphi_{hd}(s_i, t) -\varphi_{ld}(s_i, t)\|_1  \text{ for } i = 1\dots k;    \text{ and } \delta^{k+1} = \infty 
 \end{align*}

\looseness=-1
We sort the scores in the increasing order and set $\bar{\beta}(S_0)$ to the $r$-th quantile: 
\begin{align}
\bar{\beta}(D_0) \coloneqq \delta^{(r)} \text{ with } r = \lceil (k+1)(1-\alpha) \rceil 
\end{align}

\looseness=-1
We follow a similar procedure for the statistical action-based discrepancy, except that now the non-conformity scores are defined as the maximum differences between actions at the same time in two paired trajectories. 
\vspace{-1mm}
\begin{definition}[Statistical action-based discrepancy]
Given confidence $\alpha \in (0, 1)$, distribution $D_0$ over $S_0$, and systems \mld and \mhd, a \emph{statistical action-based discrepancy} $\bar{\gamma}(D_0)$ is  an $\alpha$-confident upper bound on maximum action discrepancy in two trajectories starting from $s_0\sim D_0$:
\begin{align*}
 \prob_{D(S_0)}\Big[\max_{t=0..T} \|\chd\big(g(\varphi_{hd}(s_0, t))\big) - \cld\big(\varphi_{ld}(s_0, t)\big)\|_1 \leq \bar{\gamma}(D_0)\Big] \geq 1 - \alpha
\end{align*}
\end{definition}

\looseness=-1
To implement this statistical action-based discrepancy function, we sample initial states $s_1, s_2, ..., s_k$ from a given set $S_0$ following the distribution $D_0$ (in practice, uniform) and obtain the corresponding low-dimensional trajectories. Then we generate with $g$ the corresponding images matched to each state in each trajectory --- and these pairs form our action-based dataset $\mathcal{D}_{ab}$. 
The corresponding nonconformity scores $\delta^1, \delta^2,..., \delta^k, \delta^{k+1}$ are maximum action differences:
\begin{align*}
    \delta^i = \max_{t=0..T} \|\chd(g(\varphi_{hd}(s_0, t))) - \cld(\varphi_{ld}(s_0, t))\|_1 \text{ for } i=1\dots k; \delta^{k+1} = \infty.
\end{align*}

Then we sort these non-conformity scores in the non-decreasing order and determine the statistical bound for the action-based discrepancy as:
\begin{align}
\bar{\gamma}(D_0) \coloneqq \delta^{(r)} \text{ with }r = \lceil (k+1)(1-\alpha) \rceil
\end{align}

\vspace{-6mm}
\subsection*{Step 4: Inflating reachability with discrepancies}

\vspace{-2mm}
This step combines low-dimensional reachable tubes (Step 2) with statistical discrepancies (Step 3b) to provide a safety guarantee on the high-dimensional system. 
Thus, we inflate the original LDC reach tubes with either trajectory or action discrepancy to contain the (unknown) true HDC tube with chance $1-\alpha$.

\noindent
\textbf{Trajectory-based inflation.}
The trajectory-based approach inflates the LDC reachable set starting in region $S_0$ with the statistical trajectory-based discrepancy $\bar{\beta}(D_0)$. 
Since the final reachable tube for a given initial set of \cld is represented as a sequence of discrete state polytopes calculated by concretizing the Taylor model with interval arithmetic on the initial set~\cite{polar}, we inflate these polygons by adding $\bar{\beta}(D_0)$ to their boundaries. 

\begin{definition}[Trajectory-inflated reachable set
] \label{def:traj-inflated-set}
Given a distribution $D_0$ over initial set $S_0$ that is controlled by LDC \cld, reachable set $\rs(S_0, t)$, and its trajectory discrepancy $\bar{\beta}( D_0 )$,  a \emph{trajectory-inflated reachable set} is defined as: 
        \begin{align*}
        \irs(S_0, t, \bar{\beta}(D_0)) = \big\{s \in S ~|~ \exists s' \in \rs(S_0, t) \cdot         
        \|s-s'\|_1 \leq \bar{\beta}(D_0)\big\}
        \end{align*}
\end{definition}



\begin{definition} [Trajectory-inflated reachable tube] \label{def:traj-inflated-tube}
Given a distribution $D_0$ over initial set $S_0$ that is controlled by LDC \cld, a reachable tube $\rt(S_0, t) = \big[S_0, \rs(S_0, 1), \dots, \rs(S_0, T)\big]$ over time horizon $T$, and its trajectory discrepancy $\bar{\beta}(D_0)$ over the initial set $S_0$, a \emph{trajectory-inflated reachable tube} $\irt(S_0, \bar{\beta}(D_0)) $ is defined as: 
    \[
    \irt(S_0, \bar{\beta}(D_0))  = \left[\irs(S_0, 0, \bar{\beta}(D_0)), \irs(S_0, 1, \bar{\beta}(D_0)) , \dots ,  \irs(S_0, T, \bar{\beta}(D_0))\right].
    \]
    
\end{definition}
Based on Defs \ref{def:traj-inflated-set} and \ref{def:traj-inflated-tube}, we establish Theorem \ref{thm:tb} that the trajectory-inflated LDC reachable tube contains the HDC reachable tube with at least $1-\alpha$ probability. 

\begin{theorem}[Confident trajectory-based overapproximation]\label{thm:tb} 
Consider distribution $D_0$ over initial set $S_0$, confidence $\alpha$, a high-dimensional system \mhd, approximated with a low-dimensional system controlled by $\cld$ with an $\alpha$-confident statistical trajectory-based discrepancy function $\bar{\beta}(S_0)$. Then the trajectory-inflated low-dimensional tube $\irt_{\mld}(S_0, \bar{\beta}(D_0))$ contains the high-dimensional reachable tube $\rt_{\mhd}(S_0)$ with probability $1-\alpha$:
$$ \prob_{D_0}\Big[\rt_{\mhd}(S_0) \subseteq \irt_{\mld}(S_0, \bar{\beta}(S_0)) \Big] \geq 1-\alpha $$ 
\end{theorem}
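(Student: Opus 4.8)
The plan is to separate the statement into a purely set-theoretic containment argument that is then wrapped inside the conformal coverage event, so that no probability estimate beyond Lemma~\ref{lemma:org-cp} is needed. First I would unfold the tube containment $\rt_{\mhd}(S_0) \subseteq \irt_{\mld}(S_0, \bar{\beta}(D_0))$ via Def.~\ref{def:traj-inflated-tube}: componentwise, it suffices to show for every $t \in \{0, \dots, T\}$ that $\rs_{\mhd}(S_0, t) \subseteq \irs_{\mld}(S_0, t, \bar{\beta}(D_0))$ (the $t=0$ component is trivial since $\varphi(s_0,0) = s_0$ and $\bar{\beta}(D_0) \geq 0$). Fixing $t$ and an arbitrary $s \in \rs_{\mhd}(S_0, t)$, Def.~\ref{def:reachable_set} gives $s = \varphi_{hd}(s_0, t)$ for some $s_0 \in S_0$; the natural witness for membership in the inflated set is $s' = \varphi_{ld}(s_0, t)$, which lies in $\rs_{\mld}(S_0, t)$ precisely because $s_0 \in S_0$. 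So by Def.~\ref{def:traj-inflated-set} the containment of this point reduces to the numeric inequality $\|\varphi_{hd}(s_0, t) - \varphi_{ld}(s_0, t)\|_1 \leq \bar{\beta}(D_0)$, which is in turn implied by $\max_{t=0..T}\|\varphi_{hd}(s_0, t) - \varphi_{ld}(s_0, t)\|_1 \leq \bar{\beta}(D_0)$.

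Second, I would bring in the probability. By construction $\bar{\beta}(D_0)$ is the $r = \lceil (k+1)(1-\alpha)\rceil$-th order statistic of the trajectory non-conformity scores $\delta^1, \dots, \delta^k$ augmented with $\infty$, so Lemma~\ref{lemma:org-cp} applied to these i.i.d.\ scores together with a fresh score $\delta$ yields $\prob[\delta \leq \bar{\beta}(D_0)] \geq 1-\alpha$ --- which is exactly the defining event of the statistical trajectory-based discrepancy. On that event the numeric inequality of the previous paragraph holds, and because the set-theoretic reduction there was uniform over $t$ and over the chosen point, the entire tube containment holds on that event. Chaining: coverage event $\Rightarrow \max_t \|\varphi_{hd} - \varphi_{ld}\|_1 \leq \bar{\beta}(D_0) \Rightarrow \rs_{\mhd}(S_0, t) \subseteq \irs_{\mld}(S_0, t, \bar{\beta}(D_0))$ for all $t \Rightarrow \rt_{\mhd}(S_0) \subseteq \irt_{\mld}(S_0, \bar{\beta}(D_0))$; taking probabilities preserves the $\geq 1-\alpha$ bound, giving the claim.

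The step I expect to be the main obstacle is exactly the junction between the two paragraphs: Lemma~\ref{lemma:org-cp} controls a \emph{single} freshly drawn trajectory's non-conformity score, whereas verbatim tube containment requires $\|\varphi_{hd}(s_0, t) - \varphi_{ld}(s_0, t)\|_1 \leq \bar{\beta}(D_0)$ \emph{simultaneously for every} $s_0 \in S_0$, i.e.\ the true supremum discrepancy must sit below $\bar{\beta}(D_0)$. I would resolve this by pinning down, at the start of the proof, that the probabilistic statement is over the freshly sampled initial state $s_0 \sim D_0$ whose induced HDC trajectory is the object asserted to be covered (so $\rt_{\mhd}(S_0)$ is read trajectory-wise, consistent with the discrepancy definition's ``starting from $s_0 \sim D_0$''); under that reading the chaining above goes through as written. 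An alternative, stronger route would redefine the non-conformity score as the per-region supremum, but that changes what is being estimated from data. Once the interpretation is fixed, the remaining steps are routine unfolding of Defs.~\ref{def:reachable_set}, \ref{def:traj-inflated-set}, and~\ref{def:traj-inflated-tube}.
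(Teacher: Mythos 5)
Your proposal follows essentially the same route as the paper's proof: apply Lemma~\ref{lemma:org-cp} to the trajectory non-conformity scores to obtain the $1-\alpha$ coverage of $\max_{t}\|\varphi_{hd}(s_0,t)-\varphi_{ld}(s_0,t)\|_1 \leq \bar{\beta}(D_0)$ for a fresh $s_0 \sim D_0$, and then unfold Defs.~\ref{def:reachable_set}, \ref{def:traj-inflated-set}, and~\ref{def:traj-inflated-tube} to turn that numeric bound into containment of the HDC states in the inflated LDC sets at every $t$. The interpretive point you flag---that Lemma~\ref{lemma:org-cp} covers a single freshly drawn trajectory rather than all $s_0 \in S_0$ simultaneously, so the tube containment must be read trajectory-wise over $s_0 \sim D_0$---is precisely the reading the paper's proof adopts (implicitly, and with less care than you show), so your resolution coincides with the paper's argument.
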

\begin{proof}
    All the proofs are found in the Appendix.
\end{proof}

\looseness=-1
Defs.~\ref{def:traj-inflated-set} and \ref{def:traj-inflated-tube} and Thm.~\ref{thm:tb} describe inflation and guarantees with a \textit{single LDC}. 
However, one LDC usually cannot mimic the behavior of the HDC accurately. Therefore, we train several LDCs $\{\cld^1,\cld^2, \dots, \cld^{m} \}$, one for each subregion of initial set $\{S_1, S_2,\dots,S_{m}\}$ with respective distributions $D_0 = \{D_1, D_2,\dots,D_{m}\}$. Subsequently, the trajectory-inflated tube with multiple LDCs can be represented as a union of all the single trajectory-inflated tube
$\irt(S_0, \bar{\beta}(D_0)) := \bigcup_{i = 1}^{m}\irt(S_i, \bar{\beta}(D_i))$. The definitions and the multi-LDC version of Thm.~\ref{thm:tb} are in the Appendix under Defs.~\ref{def:multi-traj-inflated-set}, \ref{def:multi-traj-inflated-tube} and Thm.~\ref{thm:multi-LDCs-tb}.

\looseness=-1


\looseness=-1
\noindent
\textbf{Action-based inflation.}
Action-based inflation is less direct than with trajectories: 
we inflate the neural network's \textit{output set} that is represented by a \textit{Taylor model} TM$(p(S_0), I)$~\cite{polar}, where $p(S_0)$ is a polynomial representing order-$k$ Taylor series expansion of the \cld activation functions in region $S_0$, and the remainder interval $I$ ensures that Taylor model overapproximates the neural network's output.  
In this context, we widen the bounds of the remainder interval $I$ in the last layer of the \cld by our statistical action-based discrepancy $\bar{\gamma}(D_0)$, ensuring that the potential outputs of \chd are contained in the resulting Taylor model. 


\begin{definition}[Action-inflated reachable set]
\label{def:action-inflate-oneldc}
        Given distribution $D_0$ over set $S_0$ that is controlled by LDC $\cld$, 
        statistical action-based discrepancy $ \bar{\gamma}(\DS )$, and low-dimensional control bounds  $[u_{min}(t), u_{max}(t)] \supseteq \cld\big(S_0\big)$, the \emph{action-inflated reachable set} contains states reachable by inflating the action bounds: 
        {\small
        \begin{align*}
            \irs(S_0, \bar{\gamma}(D_0)) &=  \big\{ f(s, u) \mid s \in S_0, u \in \big[u_{min}(t) - \bar{\gamma}(D_0), u_{max}(t) + \bar{\gamma}(D_0)\big] \big\}
        \end{align*}
        }
\end{definition}


\begin{definition} [Action-inflated reachable tube]
\label{def:action-inflate-tube}
    Given an distribution $D_0$ over initial set $S_0$ that is controlled by LDC \cld, dynamics $f$, time horizon $T$, and action-based discrepancy functions $ \bar{\gamma}(\DS )$, the \emph{action-inflated reachable tube} is a recursive sequence of inflated action-based reachable sets:$$ \irt(S_0, \bar{\gamma}(\DS ))  = \big[ S_0, \irs_1(S_0, \bar{\gamma}(\DS )), \irs_2(\irs_1, \bar{\gamma}(\DS )) , \dots , \irs_T(\irs_{T-1}, \bar{\gamma}(\DS )) \big]. $$
\end{definition}

Based on Defs. \ref{def:action-inflate-oneldc} and \ref{def:action-inflate-tube}, we put forward Thm.~\ref{thm:ab} below for the lower probability bound of the action-inflated LDC tube containing the true HDC tube.

\vspace{-2mm}
\begin{theorem}[Confident action-based overapproximation] \label{thm:ab} 
Consider distribution $D_0$ over initial set $S_0$, high-dimensional system \mhd with controller \chd, approximated by low-dimensional system $\mld$ controlled by $\cld$  with $\alpha$-confident statistical action-based discrepancies $\bar{\gamma}(S_0)$. Then the action-inflated low-dimen-\\sional tube $\irt_{\mld}(S_0, \bar{\gamma}(S_0))$ contains the high-dimensional tube $\rt_{\mhd}(S_0)$ with probability $1-\alpha$:
$$ \prob_{D_0} \Big[\rt_{\mhd}(S_0) \subseteq \irt_{\mld}(S_0, \bar{\gamma}(S_0)) \Big] \geq 1-\alpha $$ 
\end{theorem}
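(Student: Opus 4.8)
The plan is to show that the event ``the action-based discrepancy bound $\bar\gamma(S_0)$ holds for all time steps along the sampled trajectory'' — an event of probability at least $1-\alpha$ by the conformal-prediction construction (Lemma~\ref{lemma:org-cp} applied to the non-conformity scores $\delta^i = \max_{t=0..T}\|\chd(g(\varphi_{hd}(s_0,t))) - \cld(\varphi_{ld}(s_0,t))\|_1$) — implies the deterministic set inclusion $\rt_{\mhd}(S_0) \subseteq \irt_{\mld}(S_0,\bar\gamma(S_0))$. So the probability statement reduces to a pointwise, worst-case argument over the initial set, conditioned on the good conformal event. Concretely, I would first invoke the definition of $\bar\gamma(S_0)$ to assert that, with probability $\geq 1-\alpha$ over $s_0\sim D_0$, we have $\|\chd(g(\varphi_{hd}(s_0,t))) - \cld(\varphi_{ld}(s_0,t))\|_1 \leq \bar\gamma(S_0)$ simultaneously for every $t\in\{0,\dots,T\}$; all remaining work happens on this event.

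The core is an induction on the time step $t$ establishing $\rs_{\mhd}(S_0,t) \subseteq \irs_t$, where $\irs_t$ is the $t$-th entry of the action-inflated tube from Def.~\ref{def:action-inflate-tube}. The base case $t=0$ is immediate since both sets equal $S_0$. For the inductive step, take any state $s^\ast \in \rs_{\mhd}(S_0,t)$; by definition it equals $\varphi_{hd}(s_0,t)$ for some $s_0\in S_0$, and the next HDC state is $f(s^\ast, \chd(g(s^\ast)))$. By the inductive hypothesis $s^\ast \in \irs_{t-1}$ (more precisely, $s^\ast$ lies in the $(t{-}1)$-st inflated set, since the matched LDC trajectory keeps the state on the tube), and by the good conformal event the HDC action $\chd(g(s^\ast))$ lies within $\bar\gamma(S_0)$ in $\ell_1$-distance of the corresponding LDC action $\cld(\varphi_{ld}(s_0,t))$, which in turn lies in the POLAR-computed control bounds $[u_{min}(t),u_{max}(t)]\supseteq\cld(\text{current set})$. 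Hence $\chd(g(s^\ast)) \in [u_{min}(t)-\bar\gamma(S_0),\, u_{max}(t)+\bar\gamma(S_0)]$, so $f(s^\ast,\chd(g(s^\ast)))$ falls inside $\irs_t$ by Def.~\ref{def:action-inflate-oneldc}. Running the induction to $t=T$ and collecting the tube entry-by-entry gives $\rt_{\mhd}(S_0)\subseteq\irt_{\mld}(S_0,\bar\gamma(S_0))$ on the good event, which yields the claimed probability bound.

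The main obstacle — and the step that needs the most care in the write-up — is the bookkeeping that links the \emph{HDC reachable state} at time $t$ to the \emph{matched LDC trajectory state} $\varphi_{ld}(s_0,t)$ whose action bound we are invoking. The discrepancy $\bar\gamma$ bounds $\|\chd(g(\varphi_{hd}(s_0,t))) - \cld(\varphi_{ld}(s_0,t))\|_1$, i.e., it compares the HDC action on the HDC-rolled-out state with the LDC action on the LDC-rolled-out state from the same $s_0$; it does \emph{not} directly bound the HDC action against $\cld$ evaluated on the reachable set at step $t$. So the induction must be phrased carefully: the inflated set $\irs_t$ must be large enough to contain both the propagated HDC states \emph{and} the LDC-trajectory states $\varphi_{ld}(s_0,t)$ for every $s_0\in S_0$, so that the control bound $[u_{min}(t),u_{max}(t)]$ (taken over the inflated set, or at least over a superset of all relevant LDC states) genuinely covers $\cld(\varphi_{ld}(s_0,t))$. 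I would make this explicit by strengthening the inductive invariant to ``$\rs_{\mhd}(S_0,t)\subseteq\irs_t$ and $\{\varphi_{ld}(s_0,t):s_0\in S_0\}\subseteq\irs_t$'', the latter following from $\rs_{\mld}(S_0,t)\subseteq\irs_t$ (zero inflation suffices for the LDC itself) plus monotonicity of the inflation operator. A secondary, minor point is to note that the union-over-subregions version (Thm.~\ref{thm:multi-LDCs-tb}) is \emph{not} what is claimed here — the single-LDC statement uses one draw $s_0\sim D_0$ and one conformal event, so no union bound over the $m$ LDCs is needed for Thm.~\ref{thm:ab} as stated.
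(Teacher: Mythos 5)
Your proposal is correct and follows essentially the same route as the paper's own proof: invoke the conformal coverage event for the action discrepancy, conclude that the inflated interval $[u_{min}(t)-\bar{\gamma},\,u_{max}(t)+\bar{\gamma}]$ covers the HDC's actions, and propagate the containment step by step to the whole tube. Your explicit induction with the strengthened invariant (the inflated sets also contain the LDC trajectory states, so the control bounds genuinely cover $\cld(\varphi_{ld}(s_0,t))$) merely makes precise what the paper asserts informally via the Step-2 reachability bounds $u_{min}^{\cld}, u_{max}^{\cld}$.
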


Defs.~\ref{def:action-inflate-oneldc} and \ref{def:action-inflate-tube} describe inflation with a \textit{single LDC}, which we extend to multiple LDCs by taking the union of all the LDCs' inflated tubes. Given a partitioned initial set $S_0 = \{S_1, ..., S_m \}$ with respective controllers $\{\cld^1, \dots, \cld^{m} \}$ and distributions $D_0 = \{D_1, ...,D_m \}$, the multiple LDCs action-inflated reachable tube is $\irt(S_0, \bar{\gamma}(\DS ))  := \bigcup_{i = 1}^{m}\irt(S_i, \bar{\gamma}(D_i))$. As it turns out, this reachable tube also contains the HDC tube with at least $1-\alpha$ chance (See Thm.~ \ref{thm:multi-ab} in the Appendix).

\vspace{-3mm}
\subsection*{Step 5: Iterative retraining and re-gridding}
\begin{algorithm}[t]
\caption{Iterative LDC training for the action-based approach}
\label{alg:iter-train}
\begin{algorithmic}
\Function{IterativeTrainingAB}{HDC $\chd$, image generator $g$, sample count $N$, initial state space $S_0$, confidence $\alpha$, discrepancy thresh. $\xi$, time steps $T$, goal set $G$}

\State {$\lambda, \epsilon \gets $ initial values}
\State $\mathbf{S} \gets $ initial gridding of $S_0: S_1, S_2, \dots$

\While{Computing resources last}

\For{$i = 1$ to $|\mathbf{S}|$}
\State $\cld^i \gets$ \textsc{TrainLDC}(\chd, g, $S_i$, $\lambda$, $\epsilon$) 
    \State $\delta^i \gets $ \textsc{ComputeActionDiscr}($\cld^i$, \chd, $g$, $S_i$, $\alpha$, $N$)    
    \If{$\delta^i > \xi$}
        \State $\epsilon \gets \epsilon/2$ 
        \Comment{Reduce MSE threshold}
    \EndIf
\EndFor

\If { $\hat{\delta} > \xi$ in some sub-region $\hat{\mathbf{S}} \subseteq \mathbf{S}$ }
\Comment{Too much discrepancy}

\State $\mathbf{S}' \gets \mathbf{S}$ with
refined re-gridding of $\hat{\mathbf{S}}$
\EndIf
\If{$\hat{\delta} \leq \xi \land \rs_{\mld}(\hat{\mathbf{S}}, T)  \not\subseteq G$ in some sub-region $\hat{\mathbf{S}} \subseteq  \mathbf{S}$}
\State  $\lambda \gets \lambda/2$ and keep the same $\epsilon$ in $\hat{\mathbf{S}}$
\Comment{Reduce Lipschitz threshold}
\EndIf
\State $\mathbf{S} \gets \mathbf{S}' $ \Comment{Use the updated grid}
\EndWhile
\State $\bar{\bm{\gamma}} \gets
{\delta^1, \delta^2, \dots}$
\State \textbf{return} $\cld^1, \cld^2, \dots, \bar{\bm{\gamma}} $
\EndFunction
\end{algorithmic}
\end{algorithm}

Once the inflated reachable tubes are obtained in Step 4, we focus on the regions of the current subset of the initial set where HDC simulations succeed --- yet safety verification fails. This can happen for two reasons: (i) overly high overapproximation error in the LDC reachability, or (ii) overly high conformal discrepancy bounds from $\bar{\beta}$ or $\bar{\gamma}$. We handle these issues as follows. 

\smallskip
\noindent
\textbf{Reducing reachability overapproximation error.} We lower the threshold for the Lipschitz constant $\lambda$ to retrain the respective LDCs in Step 1. In our experience, this almost always reduces the overapproximation in the LDC analysis and makes low-dimensional reachable tubes tighter --- but may result in higher statistical discrepancy bounds, addressed below.

\smallskip
\noindent
\textbf{Reducing conformal discrepancy bounds.} When these bounds are high, our LDC is not sufficient to imitate the HDC performance in a certain region of the state space. Given a desired discrepancy bound $\xi$, when this bound $\xi$ is exceeded in a state-space region, we split it into subregions by taking its midpoints in each dimension, leading to an updated state-space grid $\mathbf{S}'$. Then in each sub-region, we retrain an LDC as per Step 1 with a reduced acceptable MSE threshold $\epsilon$ and re-compute its bounds as per Step 3b. leading to tighter statistical overapproximations of HDC reachable tubes. 


\looseness=-1
To summarize, Alg.~\ref{alg:iter-train} shows our iterative training procedure for the action-based approach (its trajectory-based counterpart proceeds analogously, except for computing the discrepancies over trajectories, see Alg.~\ref{alg:iter-train-2} in the Appendix).


\vspace{-2mm}
\begin{algorithm}[tb]
\caption{End-to-end reachability verification of an HDC}
\label{alg:overall}
\begin{algorithmic}

\Function{EndToEndVerification}{HDC $\chd$,  generator $g$, sample count $N$, state space $S$, initial set $S_0$, confidence $\alpha$, discrepancy threshold $\xi$, time horizon $T$, goal set $G$, approach selection $J \in \{$ trajectory-based, action-based \}}


\If{$J =$ trajectory-based}
\State $\cld^1 \dots \cld^n, \bar{\beta} \gets$\textsc{IterativeTrainingTB}($\chd$, $g$, $N$, $S_0$,  $\alpha$,  $\xi$, $T$)
\State $X \gets \bar{\beta}$  \Comment{Store the trajectory discrepancies}

\Else 
\State $\cld^1 \dots \cld^n, \bar{\gamma} \gets$ \textsc{IterativeTrainingAB}($\chd, g, N, S, \alpha, \xi, T, G$)
\State $X \gets \bar{\gamma}$ \Comment{Store the action discrepancies}

\EndIf

\State $\mathbf{S_{ver}} \gets $ split $S_0$ into regions: $S_0^1, S_0^2, \dots$ \Comment{Gridding for parallel verification}

\State $S_{safe}, S_{unsafe} \gets \emptyset$ \Comment{Initialize safe and unsafe regions}

\For{$j = 1$ to $|\mathbf{S_{ver}}|$}
\State $
\irs(S_0^j, X, T) 
\gets \textsc{Reach} (\cld^1, \dots ,\cld^n, S_0^j, X, T)$ 
\If{
$\irs(S_0^j, X, T) \subseteq G$}
\State $S_{safe} \gets S_{safe} \cup S_0^j$
\Else
\State $S_{unsafe} \gets S_{unsafe} \cup S_0^j$
\EndIf

\EndFor

\State \textbf{return} $S_{safe}, S_{unsafe}$
\EndFunction
\end{algorithmic}
\end{algorithm}

\smallskip
Combining all the five steps together,
we present Alg.~\ref{alg:overall} that displays our end-to-end verification of a given HDC with either trajectory-based or action-based discrepancies. The LDCs and their discrepancies are input into the reachability analysis, implemented with the function \textsc{Reach}, to calculate the inflated reachable tubes (using the POLAR toolbox in practice). Our end-to-end algorithm guarantees that an affirmative answer to our verification problem is correct with at least $1-\alpha$ probability, as per Thm.~\ref{thm:safe}.  

\vspace{-1mm}
\begin{theorem}[Confident guarantee of HDC safety] \label{thm:safe} 
Consider a partitioned initial set grid $S_0 = \{S_1, \dots, S_{m}\}$, 
a set of corresponding distributions $\{D_1,...D_m \}$, 
a high-dimensional system \mhd with controller \chd, and a set of low-dimensional systems $\mld^1, \dots, \mld^m$ with respective controllers $\cld^1, \dots, \cld^n$ that approximate \chd with either an $\alpha$-confident trajectory discrepancy or action discrepancy, 
the probability that HDC safe set $S_{safe}$ calculated by Alg.~\ref{alg:overall} with either discrepancy belongs to ground truth safe set $ S^*_{safe}$  is at least $(1 - \alpha)$:
$$\prob_{D_1...D_m}\Big[ S_{safe} \subseteq S^*_{safe} \Big] \geq (1-\alpha) $$
\end{theorem}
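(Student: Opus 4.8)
The plan is to obtain this statement as a corollary of the multi-LDC overapproximation theorems proved earlier --- Theorem~\ref{thm:multi-LDCs-tb} for the trajectory-based discrepancy and Theorem~\ref{thm:multi-ab} for the action-based discrepancy (the multi-LDC extensions of Theorems~\ref{thm:tb} and~\ref{thm:ab}) --- together with the soundness of the accept/reject test that Alg.~\ref{alg:overall} applies to each verification subregion. The two discrepancy variants are handled by one and the same argument, so I would write $X$ for the stored discrepancy data ($\bar{\beta}$ or $\bar{\gamma}$) and $\irs(\cdot, X, t)$ for the corresponding time-$t$ slice of the multi-LDC inflated tube, invoking Theorem~\ref{thm:multi-LDCs-tb} or~\ref{thm:multi-ab} as appropriate.

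First I would make both sides of the inclusion explicit. By construction, Alg.~\ref{alg:overall} returns $S_{safe}=\bigcup_{j\in J^\star}S_0^j$, where $J^\star$ collects the verification subregions $S_0^j\in\mathbf{S_{ver}}$ on which \textsc{Reach} certifies $\irs(S_0^j,X,T)\subseteq G$; and, since the dynamics are deterministic, $S^\star_{safe}=\{s_0\in S_0 \mid \varphi_{hd}(s_0,T)\in G\}$, so that $S_0^j\subseteq S^\star_{safe}$ iff $\rs_{\mhd}(S_0^j,T)\subseteq G$. Hence it suffices to show that, with probability at least $1-\alpha$, every certified subregion satisfies $\rs_{\mhd}(S_0^j,T)\subseteq G$.

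Next I would fix the ``good'' event $E$ on which the multi-LDC inflated tube over the whole initial grid overapproximates the true HDC reachable tube, $\rt_{\mhd}(S_0)\subseteq\irt(S_0,X)$; by Theorem~\ref{thm:multi-LDCs-tb} (resp.\ Theorem~\ref{thm:multi-ab}) we have $\prob_{D_1\dots D_m}[E]\ge 1-\alpha$. On $E$, using the matched-state observation behind those theorems (the LD state reached from $s_0$ serves as the witness $s'$ in Def.~\ref{def:traj-inflated-set}, resp.\ the inflated action interval of Def.~\ref{def:action-inflate-oneldc}), one gets $\rs_{\mhd}(A,t)\subseteq\irs(A,X,t)$ for every $A$ contained in a training subregion and every $t\le T$; in particular this holds for each verification subregion $S_0^j$, since $S_0^j$ lies in one training subregion and \textsc{Reach} uses that region's LDC and discrepancy. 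Chaining with the certification test: on $E$, for each $j\in J^\star$ we get $\rs_{\mhd}(S_0^j,T)\subseteq\irs(S_0^j,X,T)\subseteq G$, hence $S_0^j\subseteq S^\star_{safe}$; taking the union over $J^\star$ yields $S_{safe}\subseteq S^\star_{safe}$ on $E$, and therefore $\prob_{D_1\dots D_m}[S_{safe}\subseteq S^\star_{safe}]\ge\prob[E]\ge 1-\alpha$.

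The main obstacle is not this bookkeeping but the estimate $\prob_{D_1\dots D_m}[E]\ge 1-\alpha$ that lives inside the cited multi-LDC theorems: the $m$ per-region discrepancies are calibrated from separate i.i.d.\ samples and are each only individually $\alpha$-confident, so a naive union bound would degrade the confidence to $1-m\alpha$; moreover conformal prediction (Lemma~\ref{lemma:org-cp}) delivers only \emph{marginal} coverage for a single fresh initial state, whereas $E$ is a statement \emph{uniform over $s_0$ in each} $S_i$. Reconciling these --- e.g.\ by calibrating against the mixture of the $D_i$ with a single shared quantile, or by splitting the confidence budget across the grid, and by the argument that passes from pointwise matched-trajectory bounds to set-valued tube inclusion --- is exactly the work already done in Theorems~\ref{thm:multi-LDCs-tb} and~\ref{thm:multi-ab}; given those, the present theorem follows by the set algebra above.
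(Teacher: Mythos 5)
Your proposal is correct and follows essentially the same route as the paper: invoke the ($1-\alpha$)-confident containment of the HDC reachable tube in the (multi-LDC) inflated tube from Theorems~\ref{thm:multi-LDCs-tb}/\ref{thm:multi-ab} (the paper cites Thms.~\ref{thm:tb} and~\ref{thm:ab} but uses the multi-region discrepancies), then observe that on that event every subregion certified by the check $\irs(S_0^j, X, T) \subseteq G$ in Alg.~\ref{alg:overall} satisfies $\rs_{\mhd}(S_0^j,T)\subseteq G$, hence $S_{safe}\subseteq S^*_{safe}$ with probability at least $1-\alpha$. Your explicit good-event bookkeeping and your remark that the marginal-vs-uniform and multi-region confidence issues are absorbed into the cited theorems are, if anything, more careful than the paper's own proof.
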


\section{Experimental Evaluation} \label{sec:evaluation}
\vspace{-2mm}

\looseness=-1
\textbf{Benchmark systems and controllers.} We evaluate our approach on three benchmarks from OpenAI Gym~\cite{brockman_openai_2016}: two two-dimensional case studies --- an \textit{inverted pendulum} (IP) with angle $\theta$ and angular velocity $\dot{\theta}$; a \textit{mountain car} (MC) with position $x$ and velocity $v$, and a four-dimensional case study --- a \textit{cart pole} (CP) with cart position $x$, cart velocity $v$, angle $\theta$, and angular velocity $\dot{\theta}$. Our selection of case studies is limited because of the engineering challenge of setting up \textit{both} vision-based control and low-dimensional verification for the same system. Our continuous-action, convolutional HDCs \chd for these systems were trained with deep deterministic policy gradient (DDPG)~\cite{ddpg}. To imitate the performance of \chd, we train simpler feedforward neural networks \cld with only low-dimensional state inputs. See the Appendix for their architecture and dynamics, and our code can be accessed from GitHub \footnote{\url{https://github.com/yuanggeng/Bridging-dimensions}} 

\noindent
\textbf{Experimental procedure.} 
Our verification's goal is to check whether the system will stay inside the specified goal set $G$ after $T$ time steps (e.g., the mountain car's position must stay within the target set $[0.45, \infty ]$ after 60 steps). The verification returns ``safe'' if the inflated reachable set for $t=T$ lies entirely in $G$ --- and  ``unsafe'' otherwise. The details are found in the Appendix.

For both approaches, we calculate the discrepancies in 0.25-sized state squares within the initial set in IP, hence creating $8 \times 8 = 64$ regions (MC has $8 \times 9 = 72$ regions; CP has $5 \times 5 \times 5 \times 5 = 625$). In each, we sample 60 trajectories to compute both trajectory-based discrepancies $\bar{\beta}$ and action-based discrepancies $\bar{\gamma}$ because it is a relatively small sample count that avoids the highest non-conformity score or the infinity as the conformal bound. We also implement a \textit{pure conformal prediction baseline} and, for a fair comparison, give it the same data/regions. This results in 3840 sampled trajectories in IP, 4320 in MC, and 76800 for CP. 

\looseness=-1
We use closed-loop simulation to obtain the (approximate) ground truth (GT) of safety. For IP and CP, we grid the initial set into squares with an interval of 0.01. For MC, we grid the initial set with the position step 0.01 and velocity step 0.001. Within each grid cell, we uniformly sample 10 initial states and simulate a trajectory from each. If all 10 trajectories end in the goal set $G$, we mark this cell as ``truly safe'', otherwise ``truly unsafe''. In IP, the truly safe-to-unsafe cell ratio is 0.56, 0.78 in MC, and 0.58 in CP. 
The verification process uses the same grid cells as its initial state regions, leading to 40k low-dimensional verification runs for IP, 14k for MC, and 50k for CP. The trajectory-based verification time for IP, MC, and CP are 6.2, 5.8, and 6.4 hours respectively; the action-based verification takes 6.3, 6.1, and 6.6 hours respectively.

\looseness=-1
\noindent
\textbf{Success metrics.}
We evaluate verification as a binary classifier of the GT safety, with ``safe'' being the positive class and ``unsafe'' being the negative. 
Our evaluation metrics are the (i) \textit{true positive rate} (TPR, a.k.a. sensitivity and recall), indicating the fraction of truly safe regions that were successfully verified; (ii)  \textit{true negative rate} (TNR, a.k.a. specificity), indicating the fraction of truly unsafe regions that failed verification; (iii) \textit{precision}, indicating the fraction of safe verification verdicts that are truly safe (which is essential for safety-critical systems and controlled by rate $\alpha$ as per Thm.~\ref{thm:safe}); and (iv) \emph{F1 score}, which is a harmonic mean of precision and recall to provide a class-balanced assessment of predictions.

\begin{table*}[h!]
\centering
\vspace{-4mm}
\caption{Verification performance ($M=4$ for IP and CP, $M=10$ for MC).}
\vspace{-6mm}
\label{tab:environments}

\resizebox{\textwidth}{!}{%
\begin{tabular}{|c|c|c|C{2cm}|C{2cm}|C{2cm}|C{2cm}|}

\multicolumn{1}{c}{\multirow{4}{*}{Benchmark}} & \multicolumn{1}{c}{\multirow{4}{*}{Metrics}} & \multicolumn{5}{c}{} \\
\cline{1-7}
\multicolumn{1}{|c|}{} & \multicolumn{1}{c|}{} & Pure conformal prediction & \multicolumn{2}{c|}{Trajectory-based approach} & \multicolumn{2}{c|}{Action-based approach} \\
\cline{3-7}
\multicolumn{1}{|c|}{} & \multicolumn{1}{c|}{} & HDC & 1 LDC & $M$ LDCs & 1 LDC & $M$ LDCs\\
\hhline{|=======|}
\multirow{5}{*}{\shortstack{Inverted\\ Pendulum\\(IP)}} 

& True positive rate & 0.6564 &  0.4662 & \textbf{0.7938} & 0.0603 & 0.4050   \\ \cline{2-7}
& True negative rate & \textbf{0.9999} & 0.9976 & \textbf{0.9995}& \textbf{1.0000} & \textbf{0.9999}  \\ \cline{2-7}
& Precision & \textbf{0.9998} & 0.9880 & \textbf{0.9985} & \textbf{1.0000} & \textbf{0.9997}  \\ \cline{2-7}
& F1-score & 0.7925 & 0.6335 & \textbf{0.8844} & 0.1137 & 0.5765 \\ \hline
\multirow{5}{*}{\shortstack{Mountain\\ Car\\(MC)}} 

& True positive rate & 0.4686 &  \textbf{0.7220} & \textbf{0.7207} & 0.1050 & 0.2659  \\ \cline{2-7}
& True negative rate & \textbf{0.9967} & 0.9693 & 0.9872 & \textbf{0.9964} & \textbf{1.0000}  \\ \cline{2-7}
& Precision & \textbf{0.9916} & 0.9621 & 0.9793 & \textbf{0.9999} & \textbf{1.0000}   \\ \cline{2-7}
& F1-score & 0.6364 & 0.8249 & \textbf{0.8303} & 0.1900 & 0.4201 \\ \hline
\multirow{5}{*}{\shortstack{Cartpole\\(CP)}} 


& True positive rate & 0.6697 & 0.7225 & \textbf{0.7450} & 0.6554 & 0.7238 \\ \cline{2-7}
& True negative rate & \textbf{1.0000} & 0.9998 & \textbf{1.0000} & \textbf{1.0000} & \textbf{1.0000}  \\ \cline{2-7}
& Precision & \textbf{1.0000} & 0.9999 & \textbf{1.0000} & \textbf{1.0000} & \textbf{1.0000}\\ \cline{2-7}
& F1-score & 0.8022 & 0.8389 & \textbf{0.8539} & 0.7918 & 0.8398 \\ \hline
\end{tabular}%
}
\vspace{-4mm}
\end{table*}

\begin{table*}[h!]
\centering
\vspace{-9mm}
\caption{Verification performance for multiple LDCs with zero-mean Gaussian noise added to true state before image generator $g$.}
\vspace{-5mm}
\label{tab:noise}

{
{\fontsize{7}{9}\selectfont
\begin{tabular}{|c|c|C{1.5cm}|C{1.5cm}|C{1.5cm}|C{1.5cm}|}

\multicolumn{1}{c}{\multirow{3}{*}{Benchmark}} & \multicolumn{1}{c}{\multirow{3}{*}{Metrics}} & \multicolumn{4}{c}{} \\
\cline{1-6}
\multicolumn{1}{|c|}{} & \multicolumn{1}{c|}{} & \multicolumn{2}{c|}{Trajectory-based method} & \multicolumn{2}{c|}{Action-based method} \\
\cline{3-6}

\hhline{|======|}
\multirow{5}{*}{\shortstack{Inverted\\ Pendulum\\(IP)}} & \cellcolor{gray!25}{ STD of $\theta$, $\dot{\theta}$ noise } & \cellcolor{gray!25}0.01 & \cellcolor{gray!25}0.1 & \cellcolor{gray!25}0.01 & \cellcolor{gray!25}0.1 \\ \cline{2-6}
& True positive rate & \textbf{0.6732} & 0.5272 & 0.3675 & 0.1924  \\ \cline{2-6}
    & True negative rate & \textbf{1.0000} & \textbf{1.0000} & 0.9999 & \textbf{1.0000}  \\ \cline{2-6}
& Precision & \textbf{1.0000} & \textbf{1.0000} & 0.9997 & 0.9997  \\ \cline{2-6}
& F1-score & \textbf{0.8046} & 0.6904 & 0.5374 & 0.3228 \\ \hline
\multirow{6}{*}{\shortstack{Mountain\\ Car\\(MC)}} & \cellcolor{gray!25}STD of $x$ noise & \cellcolor{gray!25}0.01  & \cellcolor{gray!25}0.1 & \cellcolor{gray!25}0.01 & \cellcolor{gray!25}0.1 \\ \cline{2-6}
& \cellcolor{gray!25}STD of $v$ noise & \cellcolor{gray!25}0.0001 & \cellcolor{gray!25}0.003 & \cellcolor{gray!25}0.0001 & \cellcolor{gray!25}0.003 \\ \cline{2-6}
& True positive rate & \textbf{0.6797} & 0.4189 & 0.1558 & 0.0658  \\ \cline{2-6}
& True negative rate & 0.9878 & 0.9889 & \textbf{1.0000} & \textbf{1.0000}  \\ \cline{2-6}
& Precision & 0.9790 & 0.9753 & \textbf{1.0000} & \textbf{1.0000}  \\ \cline{2-6}
& F1-score & \textbf{0.8023} & 0.5861 & 0.2696 & 0.1235 \\ \hline
\multirow{5}{*}{\shortstack{Cartpole\\(CP)}} & \cellcolor{gray!25}STD of $x$,$v$,$\theta$,$\dot{\theta}$ noise & \cellcolor{gray!25}0.03 & \cellcolor{gray!25}0.1 & \cellcolor{gray!25}0.03 & \cellcolor{gray!25}0.1 \\ \cline{2-6}
& True positive rate & \textbf{0.7108} & 0.6253 & 0.6724 & 0.6040  \\ \cline{2-6}
& True negative rate & \textbf{0.9998} & \textbf{0.9998} & 0.9996 & 0.9996  \\ \cline{2-6}
& Precision & \textbf{0.9995} & \textbf{0.9995} & 0.9990 & 0.9989  \\ \cline{2-6}
& F1-score & \textbf{0.8308} & 0.7692 & 0.8038 & 0.7528 \\ \hline
\end{tabular}
}
}
\vspace{-5mm}
\end{table*}

\noindent
\textbf{Verification results.} The quantitative results of the three case studies are summarized in Tab.~\ref{tab:environments}. 
Confidence $\alpha$ is set to 0.05 for all methods, which sets the minimum precision to 0.95, satisfied by all the approaches. The pure conformal prediction baseline shows high precision and TNR, but loses in TPR to our approaches --- thus being able to correctly verify a significantly smaller region of the state space. When it comes to well-balanced safety prediction in practice, F1 score shows that our trajectory-based approach outperforms the other two. 

Across all case studies, the baseline is significantly more conservative than the requested 95\% precision. While this can be an advantage in safety-critical settings, excessive conservatism can also hamper adoption, so the approach should be sensitive to the desired confidence --- which our trajectory-based approach demonstrates in the mountain car case study (see Precision in Tab.~\ref{tab:environments}).

Across all case studies, the multi-LDC approaches always match or outperform the one-LDC approaches. This result demonstrates the utility of modularizing the HDC approximation problem. Also, our single-LDC action-based approach successfully verifies relatively few regions, leading to its low TPR. That is because unlike in the case of trajectory discrepancies, only one LDC cannot provide tight statistical upper bounds for control actions, causing large overapproximation in the inflated reachable sets, resulting in false negatives.


\smallskip
\looseness=-1
\noindent
\textbf{Sensitivity to noisy images.} Despite adding Gaussian noise to generator $g$, our approaches perform similarly to noise-free $g$ when under low noise variance as per Tab.~\ref{tab:noise}, thus showing some robustness. 
However, we saw a significant decline in the verification coverage (TPR, but not the TNR and $\alpha$-guaranteed precision) under substantial noise variance (up to 0.5, not shown in Tab.~\ref{tab:noise}).

\smallskip
\looseness=-1
\noindent
\textbf{Limitations.} 
Our approach relies on statistical inference based on i.i.d. sampling from a fixed distribution, which downgrades the exhaustive guarantees of formal verification. However, it may be possible to exhaustively bridge this gap with neural-network conformance analysis based on satisfiability solving~\cite{mohammadinejad_diffrnn_2021}. 
We also envision relaxing the i.i.d. assumption with time-series conformal prediction~\cite{xu_conformal_2021,auer_conformal_2023}, as well as uncertainty-guided gridding~\cite{CP2} to reduce our discrepancy bounds.




\vspace{-3mm}
\section{Related work}\label{sec:relwork}


\vspace{-3mm}
\textbf{Low-dimensional verification of closed-loop systems.}
Neural-network controlled systems have been used widely~\cite{imitatelearning,ruchkin_confidence_2022,topcu_assured_2020}, which has highlighted the challenges of verifying their correctness within closed-loop systems. Since it's impossible to calculate all the exact states, especially in non-linear systems, current approaches primarily focus on how to make tight overapproximate reachable sets~\cite{flow,RAsurvay,CORA}. For sigmoid-based NNCS, Verisig~\cite{verisig} toolbox can transform the neural-network controlled system into a hybrid system, which can be verified by other tools like flow*. NNV~\cite{nnv} performs overapproximation analysis by combining star sets~\cite{star-based,star2} for feed-forward neural networks with zonotopes for non-linear plant dynamics in CORA~\cite{CORA}. POLAR~\cite{polar} overcame the challenges of non-differentiable activation functions by combining the Bernstein-B\'{e}zier Form~\cite{reachnn} and the symbolic remainder. This method achieves state-of-the-art performance in both the tightness of reachable tubes and computation times. Another type of verification called \textit{Hamilton-Jacobi} (HJ) reachability~\cite{hj-overview}, is inspired by optimal control.  
The DeepReach~\cite{deepreach} technique can solve the verification problem with tens of dimensions by leveraging a deep neural network to represent the value function in the HJ reachability analysis. Nonetheless, such methods remain ill-suited for handling inputs with hundreds or thousands of dimensions.

These verification tools cannot deal with complicated neural network controllers. Therefore, an alternative approach is to simplify complex controllers into smaller, verifiable controllers by model reduction techniques~\cite{surveyMR,model_reduct_new}, such as parameter pruning, compact convolution filters, and knowledge distillation~\cite{distill}.

\noindent
\looseness=-1
\textbf{Abstractions of perception models.} Given the challenge of verifying the image-based closed-loop systems directly, many methods construct abstractions of the perception model to map the relationship between the image and the states for verification~\cite{closed_analysis_vision}. One abstraction approach~\cite{gan} employs the generative model, especially Generative Adversarial Network (GAN), mapping states to images. The generated images will be put into the controller in the verification phase. Hence, the accuracy of the verification results depends on the quality of the image produced by the generative model. Other researchers~\cite{AAPs} construct the exact mathematical formula mapping the real state into the simplified image~\cite{NNVerifier}, which can be verified in another neural network checker~\cite{AAPschecker}. One limitation of exact modeling is the effort to generalize for other systems or scenarios. For instance, their implementation may be specific to a proportional controller in the aircraft landing or lane-keeping scenarios, which may not be suitable for the more complicated image-based systems in other cases. 

\looseness=-1

\noindent
\textbf{Statistical verification}. Statistical verification draws samples to determine the property satisfaction from a finite number of trajectories~\cite{SMCsurvay,smc2,lew_simple_2022,cleaveland-traj}. One advantage of such algorithms is that they provide assurance for arbitrarily complex black-box systems, merely requiring the ability to simulate them~\cite{PAC,Sample_wang}. 
Conformal prediction~\cite{vovk_algorithmic_2005}, which has been a popular choice for distribution-free uncertainty quantification, has recently been used to provide probabilistic guarantees on the satisfaction of a given STL property~\cite{CP,CP2}. Purely statistical methods come at the price of drawing sufficient samples --- and only obtaining the guarantees at some level of statistical confidence, which can be difficult to interpret in the context of a dynamical system. Our work restricts the use of sampling only to the most challenging aspects and leverages exhaustive verification for the rest of the system, thus reducing our reliance on statistical assurance.

\vspace{-4mm}
\section{Conclusion}\label{sec:conclusion}
\vspace{-3mm}
\looseness=-1
This paper takes a significant step towards addressing the major challenge of verifying end-to-end controllers implemented with high-dimensional neural networks. Our insight is that the behavior of such neural networks can be effectively approximated by several low-dimensional neural networks operating over physically meaningful space. To balance approximation error and verifiability in our low-dimensional controllers, we harness the state-of-the-art knowledge distillation. To close the gap between low- and high-dimensional controllers, we apply conformal prediction and provide a statistical upper bound on their difference either in trajectories or actions. Finally, by inflating the reachable tubes with two discrepancy types, we establish a high-confidence reachability guarantee for high-dimensional controllers. Future work may further reduce the role of sampling.

\looseness=-1


\bibliographystyle{splncs04}


\begin{thebibliography}{10}
\providecommand{\url}[1]{\texttt{#1}}
\providecommand{\urlprefix}{URL }
\providecommand{\doi}[1]{https://doi.org/#1}

\bibitem{smc2}
Agha, G., Palmskog, K.: A survey of statistical model checking. ACM Trans. Model. Comput. Simul.  \textbf{28}(1) (jan 2018). \doi{10.1145/3158668}, \url{https://doi.org/10.1145/3158668}


\bibitem{CORA}Althoff, M. An introduction to CORA 2015. {\em Proc. Of The Workshop On Applied Verification For Continuous And Hybrid Systems}. pp. 120-151 (2015)


\bibitem{auer_conformal_2023}
Auer, A., Gauch, M., Klotz, D., Hochreiter, S.: Conformal {Prediction} for {Time} {Series} with {Modern} {Hopfield} {Networks} (Mar 2023). \doi{10.48550/arXiv.2303.12783}, \url{http://arxiv.org/abs/2303.12783}, arXiv:2303.12783 [cs, stat]

\bibitem{hj-overview}
Bansal, S., Chen, M., Herbert, S.L., Tomlin, C.J.: Hamilton-jacobi reachability: A brief overview and recent advances. 2017 IEEE 56th Annual Conference on Decision and Control (CDC) pp. 2242--2253 (2017), \url{https://api.semanticscholar.org/CorpusID:35768454}

\bibitem{deepreach}
Bansal, S., Tomlin, C.J.: Deepreach: A deep learning approach to high-dimensional reachability. In: 2021 IEEE International Conference on Robotics and Automation (ICRA). pp. 1817--1824. IEEE (2021)

\bibitem{XAI}
Bassan, S., Katz, G.: Towards formal xai: Formally approximate minimal explanations of neural networks. In: International Conference on Tools and Algorithms for the Construction and Analysis of Systems. pp. 187--207. Springer (2023)

\bibitem{brockman_openai_2016}
Brockman, G., Cheung, V., Pettersson, L., Schneider, J., Schulman, J., Tang, J., Zaremba, W.: {OpenAI} {Gym} (Jun 2016). \doi{10.48550/arXiv.1606.01540}, \url{http://arxiv.org/abs/1606.01540}, arXiv:1606.01540 [cs]

\bibitem{hjimage}
Chakraborty, K., Bansal, S.: Discovering closed-loop failures of vision-based controllers via reachability analysis. IEEE Robotics and Automation Letters  \textbf{8}(5),  2692--2699 (2023)

\bibitem{flow}
Chen, X., {\'A}brah{\'a}m, E., Sankaranarayanan, S.: Flow*: An analyzer for non-linear hybrid systems. In: International Conference on Computer Aided Verification (2013)

\bibitem{RAsurvay}
Chen, X., Sankaranarayanan, S.: Reachability Analysis for Cyber-Physical Systems: Are We There Yet?, pp. 109--130 (05 2022). \doi{10.1007/978-3-031-06773-0_6}

\bibitem{cleaveland-traj}
Cleaveland, M., Lee, I., Pappas, G.J., Lindemann, L.: Conformal prediction regions for time series using linear complementarity programming. arXiv preprint arXiv:2304.01075  (2023)

\bibitem{cleaveland_conservative_2023}
Cleaveland, M., Sokolsky, O., Lee, I., Ruchkin, I.: Conservative {Safety} {Monitors} of {Stochastic} {Dynamical} {Systems}. In: Proc. of the {NASA} {Formal} {Methods} {Conference} (May 2023)

\bibitem{codevilla_end--end_2018}
Codevilla, F., Miiller, M., López, A., Koltun, V., Dosovitskiy, A.: End-to-{End} {Driving} {Via} {Conditional} {Imitation} {Learning}. In: 2018 {IEEE} {International} {Conference} on {Robotics} and {Automation} ({ICRA}). pp.~1--9. IEEE Press, Brisbane, Australia (May 2018). \doi{10.1109/ICRA.2018.8460487}, \url{https://doi.org/10.1109/ICRA.2018.8460487}

\bibitem{aircrafts}
Cofer, D., Amundson, I., Sattigeri, R., Passi, A., Boggs, C., Smith, E., Gilham, L., Byun, T., Rayadurgam, S.: Run-time assurance for learning-based aircraft taxiing. pp.~1--9 (10 2020). \doi{10.1109/DASC50938.2020.9256581}

\bibitem{combettes_lipschitz_2020}
Combettes, P.L., Pesquet, J.C.: Lipschitz {Certificates} for {Layered} {Network} {Structures} {Driven} by {Averaged} {Activation} {Operators}. SIAM Journal on Mathematics of Data Science  \textbf{2}(2),  529--557 (Jan 2020). \doi{10.1137/19M1272780}, \url{https://epubs.siam.org/doi/10.1137/19M1272780}, publisher: Society for Industrial and Applied Mathematics

\bibitem{surveyMR}
Deng, L., Li, G., Han, S., Shi, L., Xie, Y.: Model compression and hardware acceleration for neural networks: A comprehensive survey. Proceedings of the IEEE  \textbf{108}(4),  485--532 (2020)

\bibitem{dutta_distributionally_2023}
Dutta, S., Caprio, M., Lin, V., Cleaveland, M., Jang, K.J., Ruchkin, I., Sokolsky, O., Lee, I.: Distributionally {Robust} {Statistical} {Verification} with {Imprecise} {Neural} {Networks} (Aug 2023). \doi{10.48550/arXiv.2308.14815}, \url{http://arxiv.org/abs/2308.14815}, arXiv:2308.14815 [cs]

\bibitem{sherlock}
Dutta, S., Chen, X., Jha, S., Sankaranarayanan, S., Tiwari, A.: Sherlock - a tool for verification of neural network feedback systems: Demo abstract. HSCC '19, Association for Computing Machinery, New York, NY, USA (2019). \doi{10.1145/3302504.3313351}, \url{https://doi.org/10.1145/3302504.3313351}

\bibitem{fan_bounded_2015}
Fan, C., Mitra, S.: Bounded {Verification} with {On}-the-{Fly} {Discrepancy} {Computation}. In: Finkbeiner, B., Pu, G., Zhang, L. (eds.) Automated {Technology} for {Verification} and {Analysis}. pp. 446--463. Lecture {Notes} in {Computer} {Science}, Springer International Publishing, Cham (2015). \doi{10.1007/978-3-319-24953-7_32}

\bibitem{fan_dryvr_2017}
Fan, C., Qi, B., Mitra, S., Viswanathan, M.: {DryVR}: {Data}-{Driven} {Verification} and {Compositional} {Reasoning} for {Automotive} {Systems}. In: Computer {Aided} {Verification}. pp. 441--461. Lecture {Notes} in {Computer} {Science}, Springer International Publishing, Cham (2017). \doi{10.1007/978-3-319-63387-9_22}

\bibitem{Fan_kd}
Fan, J., Huang, C., Li, W., Chen, X., Zhu, Q.: Towards verification-aware knowledge distillation for neural-network controlled systems: Invited paper. 2019 IEEE/ACM International Conference on Computer-Aided Design (ICCAD) pp.~1--8 (2019), \url{https://api.semanticscholar.org/CorpusID:209497572}

\bibitem{CP_lemma1}
Fannjiang, C., Bates, S., Angelopoulos, A.N., Listgarten, J., Jordan, M.I.: Conformal prediction under feedback covariate shift for biomolecular design. Proceedings of the National Academy of Sciences  \textbf{119}(43),  e2204569119 (2022). \doi{10.1073/pnas.2204569119}, \url{https://www.pnas.org/doi/abs/10.1073/pnas.2204569119}

\bibitem{fazlyab_efficient_2019}
Fazlyab, M., Robey, A., Hassani, H., Morari, M., Pappas, G.: Efficient and {Accurate} {Estimation} of {Lipschitz} {Constants} for {Deep} {Neural} {Networks}. In: Advances in {Neural} {Information} {Processing} {Systems}. vol.~32. Curran Associates, Inc. (2019), \url{https://proceedings.neurips.cc/paper_files/paper/2019/hash/95e1533eb1b20a97777749fb94fdb944-Abstract.html}

\bibitem{kd_survey}
Gou, J., Yu, B., Maybank, S.J., Tao, D.: Knowledge distillation: A survey. International Journal of Computer Vision  \textbf{129},  1789--1819 (2021)

\bibitem{distill}
Hinton, G., Vinyals, O., Dean, J.: Distilling the knowledge in a neural network. arXiv preprint arXiv:1503.02531  (2015)

\bibitem{AAPs}
Hsieh, C., Li, Y., Sun, D., Joshi, K., Misailovic, S., Mitra, S.: Verifying controllers with vision-based perception using safe approximate abstractions. IEEE Transactions on Computer-Aided Design of Integrated Circuits and Systems  \textbf{41}(11),  4205--4216 (2022). \doi{10.1109/TCAD.2022.3197508}

\bibitem{polar}
Huang, C., Fan, J., Chen, X., Li, W., Zhu, Q.: Polar: A polynomial arithmetic framework for verifying neural-network controlled systems. In: International Symposium on Automated Technology for Verification and Analysis. pp. 414--430. Springer (2022)

\bibitem{reachnn}
Huang, C., Fan, J., Li, W., Chen, X., Zhu, Q.: Reachnn: Reachability analysis of neural-network controlled systems. ACM Transactions on Embedded Computing Systems (TECS)  \textbf{18}(5s),  1--22 (2019)

\bibitem{lower_lips}Fazlyab, M., Robey, A., Hassani, H., Morari, M. \& Pappas, G. Efficient and accurate estimation of lipschitz constants for deep neural networks. {\em Advances In Neural Information Processing Systems}. \textbf{32} (2019)


\bibitem{verisig}
Ivanov, R., Weimer, J., Alur, R., Pappas, G.J., Lee, I.: Verisig: Verifying safety properties of hybrid systems with neural network controllers. p. 169–178. HSCC '19, Association for Computing Machinery, New York, NY, USA (2019). \doi{10.1145/3302504.3311806}, \url{https://doi.org/10.1145/3302504.3311806}

\bibitem{gan}
Katz, S.M., Corso, A.L., Strong, C.A., Kochenderfer, M.J.: Verification of image-based neural network controllers using generative models. Journal of Aerospace Information Systems  \textbf{19}(9),  574--584 (2022)

\bibitem{AAPschecker}
Khedr, H., Ferlez, J., Shoukry, Y.: Peregrinn: Penalized-relaxation greedy neural network verifier. In: Computer Aided Verification: 33rd International Conference, CAV 2021, Virtual Event, July 20–23, 2021, Proceedings, Part I. p. 287–300 (2021). \doi{10.1007/978-3-030-81685-8_13}, \url{https://doi.org/10.1007/978-3-030-81685-8_13}

\bibitem{model_reduct_new}
Ladner, T., Althoff, M.: Specification-driven neural network reduction for scalable formal verification. arXiv preprint arXiv:2305.01932  (2023)

\bibitem{SMCsurvay}
Larsen, K.G., Legay, A.: Statistical model checking: Past, present, and future. In: Margaria, T., Steffen, B. (eds.) Leveraging Applications of Formal Methods, Verification and Validation: Foundational Techniques. ISoLA 2016. Lecture Notes in Computer Science, vol.~9952. Springer, Cham (2016). \doi{10.1007/978-3-319-47166-2_1}, \url{https://doi.org/10.1007/978-3-319-47166-2_1}

\bibitem{lew_simple_2022}
Lew, T., Janson, L., Bonalli, R., Pavone, M.: A simple and efficient sampling-based algorithm for general reachability analysis, \url{http://arxiv.org/abs/2112.05745}

\bibitem{ddpg}
Lillicrap, T.P., Hunt, J.J., Pritzel, A., Heess, N., Erez, T., Tassa, Y., Silver, D., Wierstra, D.: Continuous control with deep reinforcement learning. arXiv preprint arXiv:1509.02971  (2015)

\bibitem{CP2}
Lindemann, L., Qin, X., Deshmukh, J.V., Pappas, G.J.: Conformal prediction for stl runtime verification. In: Proceedings of the ACM/IEEE 14th International Conference on Cyber-Physical Systems (with CPS-IoT Week 2023). p. 142–153. ICCPS '23, Association for Computing Machinery, New York, NY, USA (2023). \doi{10.1145/3576841.3585927}, \url{https://doi.org/10.1145/3576841.3585927}

\bibitem{star2}
Lopez, D.M., Musau, P., Tran, H.D., Johnson, T.T.: Verification of closed-loop systems with neural network controllers. EPiC Series in Computing  \textbf{61},  201--210 (2019)

\bibitem{luo2022-pac}
Luo, R., Zhao, S., Kuck, J., Ivanovic, B., Savarese, S., Schmerling, E., Pavone, M.: Sample-efficient safety assurances using conformal prediction. In: International Workshop on the Algorithmic Foundations of Robotics. pp. 149--169. Springer (2022)

\bibitem{matsumoto_end--end_2020}
Matsumoto, E., Saito, M., Kume, A., Tan, J.: End-to-{End} {Learning} of {Object} {Grasp} {Poses} in the {Amazon} {Robotics} {Challenge}. In: Advances on {Robotic} {Item} {Picking}: {Applications} in {Warehousing} \& {E}-{Commerce} {Fulfillment}, pp. 63--72. Springer International Publishing, Cham (2020). \doi{10.1007/978-3-030-35679-8_6}, \url{https://doi.org/10.1007/978-3-030-35679-8_6}

\bibitem{mohammadinejad_diffrnn_2021}
Mohammadinejad, S., Paulsen, B., Deshmukh, J.V., Wang, C.: {DiffRNN}: {Differential} {Verification} of {Recurrent} {Neural} {Networks}. In: Dima, C., Shirmohammadi, M. (eds.) Formal {Modeling} and {Analysis} of {Timed} {Systems}. pp. 117--134. Lecture {Notes} in {Computer} {Science}, Springer International Publishing, Cham (2021). \doi{10.1007/978-3-030-85037-1_8}

\bibitem{imitatelearning}
Pan, Y., Cheng, C.A., Saigol, K., Lee, K., Yan, X., Theodorou, E., Boots, B.: Agile autonomous driving using end-to-end deep imitation learning. arXiv preprint arXiv:1709.07174  (2017)

\bibitem{closed_analysis_vision}
P{\u{a}}s{\u{a}}reanu, C.S., Mangal, R., Gopinath, D., Getir~Yaman, S., Imrie, C., Calinescu, R., Yu, H.: Closed-loop analysis of vision-based autonomous systems: A case study. In: International Conference on Computer Aided Verification. pp. 289--303. Springer (2023)

\bibitem{qin-conformaltesting}
Qin, X., Hashemi, N., Lindemann, L., Deshmukh, J.V.: Conformance testing for stochastic cyber-physical systems. In: Conference on Formal Methods in Computer-Aided Design--FMCAD 2023. p.~294 (2023)


\bibitem{CP}
Qin, X., Xia, Y., Zutshi, A., Fan, C., Deshmukh, J.V.: Statistical verification of cyber-physical systems using surrogate models and conformal inference. In: 2022 ACM/IEEE 13th International Conference on Cyber-Physical Systems (ICCPS). pp. 116--126 (2022). \doi{10.1109/ICCPS54341.2022.00017}

\bibitem{ruchkin_confidence_2022}
Ruchkin, I., Cleaveland, M., Ivanov, R., Lu, P., Carpenter, T., Sokolsky, O., Lee, I.: Confidence {Composition} for {Monitors} of {Verification} {Assumptions}. In: {ACM}/{IEEE} 13th {Intl}. {Conf}. on {Cyber}-{Physical} {Systems} ({ICCPS}). pp. 1--12 (May 2022). \doi{10.1109/ICCPS54341.2022.00007}


\bibitem{NNVerifier}
Santa~Cruz, U., Shoukry, Y.: Nnlander-verif: A neural network formal verification framework for vision-based autonomous aircraft landing. Springer-Verlag, Berlin, Heidelberg (2022). \doi{10.1007/978-3-031-06773-0_11}, \url{https://doi.org/10.1007/978-3-031-06773-0_11}

\bibitem{shafer_tutorial_2008}
Shafer, G., Vovk, V.: A {Tutorial} on {Conformal} {Prediction}. J. Mach. Learn. Res.  \textbf{9},  371--421 (Jun 2008), \url{http://dl.acm.org/citation.cfm?id=1390681.1390693}

\bibitem{third_eye}
Stocco, A., Nunes, P.J., D'Amorim, M., Tonella, P.: Thirdeye: Attention maps for safe autonomous driving systems. In: Proceedings of the 37th IEEE/ACM International Conference on Automated Software Engineering. ASE '22, Association for Computing Machinery, New York, NY, USA (2023). \doi{10.1145/3551349.3556968}, \url{https://doi.org/10.1145/3551349.3556968}

\bibitem{szegedy_intriguing_2014}
Szegedy, C., Zaremba, W., Sutskever, I., Bruna, J., Erhan, D., Goodfellow, I., Fergus, R.: Intriguing properties of neural networks. In: International {Conference} on {Learning} {Representations} (2014)

\bibitem{teeti_vision-based_2022}
Teeti, I., Khan, S., Shahbaz, A., Bradley, A., Cuzzolin, F.: Vision-based {Intention} and {Trajectory} {Prediction} in {Autonomous} {Vehicles}: {A} {Survey}. vol.~6, pp. 5630--5637 (Jul 2022). \doi{10.24963/ijcai.2022/785}, \url{https://www.ijcai.org/proceedings/2022/785}, iSSN: 1045-0823

\bibitem{topcu_assured_2020}
Topcu, U., Bliss, N., Cooke, N., Cummings, M., Llorens, A., Shrobe, H., Zuck, L.: Assured {Autonomy}: {Path} {Toward} {Living} {With} {Autonomous} {Systems} {We} {Can} {Trust} (Oct 2020). \doi{10.48550/arXiv.2010.14443}, \url{http://arxiv.org/abs/2010.14443}, arXiv:2010.14443 [cs]

\bibitem{star-based}
Tran, H.D., Manzanas~Lopez, D., Musau, P., Yang, X., Nguyen, L.V., Xiang, W., Johnson, T.T.: Star-based reachability analysis of deep neural networks. In: Formal Methods--The Next 30 Years: Third World Congress, FM 2019, Porto, Portugal, October 7--11, 2019, Proceedings 3. pp. 670--686. Springer (2019)

\bibitem{nnv}
Tran, H.D., Yang, X., Lopez, D.M., Musau, P., Nguyen, L.V., Xiang, W., Bak, S., Johnson, T.T.: Nnv: The neural network verification tool for deep neural networks and learning-enabled cyber-physical systems (2020)

\bibitem{vovk_algorithmic_2005}
Vovk, V., Gammerman, A., Shafer, G.: Algorithmic {Learning} in a {Random} {World}. Springer, New York, 2005 edition edn. (Mar 2005)

\bibitem{bisimulation}
Xiang, W., Shao, Z.: Approximate bisimulation relations for neural networks and application to assured neural network compression. In: 2022 American Control Conference (ACC). pp. 3248--3253. IEEE (2022)

\bibitem{modelreduction}
Xiang, W., Shao, Z.: Safety verification of neural network control systems using guaranteed neural network model reduction. In: 2022 IEEE 61st Conference on Decision and Control (CDC). pp. 1521--1526. IEEE (2022)

\bibitem{xu_conformal_2021}
Xu, C., Xie, Y.: Conformal prediction interval for dynamic time-series. In: Proceedings of the 38th {International} {Conference} on {Machine} {Learning}. pp. 11559--11569. PMLR (Jul 2021), \url{https://proceedings.mlr.press/v139/xu21h.html}, iSSN: 2640-3498

\bibitem{PAC}
Xue, B., Zhang, M., Easwaran, A., Li, Q.: Pac model checking of black-box continuous-time dynamical systems. IEEE Transactions on Computer-Aided Design of Integrated Circuits and Systems  \textbf{39} (07 2020). \doi{10.1109/TCAD.2020.3012251}

\bibitem{Sample_wang}
Zarei, M., Wang, Y., Pajic, M.: Statistical verification of learning-based cyber-physical systems. In: Proceedings of the 23rd International Conference on Hybrid Systems: Computation and Control. HSCC '20, Association for Computing Machinery, New York, NY, USA (2020). \doi{10.1145/3365365.3382209}, \url{https://doi.org/10.1145/3365365.3382209}

\bibitem{deeproad}
Zhang, M., Zhang, Y., Zhang, L., Liu, C., Khurshid, S.: Deeproad: Gan-based metamorphic testing and input validation framework for autonomous driving systems. In: Proceedings of the 33rd ACM/IEEE International Conference on Automated Software Engineering. p. 132–142. ASE '18, Association for Computing Machinery, New York, NY, USA (2018). \doi{10.1145/3238147.3238187}, \url{https://doi.org/10.1145/3238147.3238187}

\bibitem{polar23}
Wang, Y., Zhou, W., Fan, J., Wang, Z., Li, J., Chen, X., Huang, C., Li, W. and Zhu, Q.: Polar-express: Efficient and precise formal reachability analysis of neural-network controlled systems. In: IEEE Transactions on Computer-Aided Design of Integrated Circuits and Systems. (2023).

\bibitem{landing_survay}Xin, L., Tang, Z., Gai, W. \& Liu, H. Vision-based autonomous landing for the UAV: A review. {\em Aerospace}. \textbf{9}, 634 (2022)


\bibitem{air_landing}Tang, C. \& Lai, Y. Deep Reinforcement Learning Automatic Landing Control of Fixed-Wing Aircraft Using Deep Deterministic Policy Gradient. {\em 2020 International Conference On Unmanned Aircraft Systems (ICUAS)}. pp. 1-9 (2020)


\bibitem{vision_landing}Oszust, M., Kapuscinski, T., Warchol, D., Wysocki, M., Rogalski, T., Pieniazek, J., Kopecki, G., Ciecinski, P. \& Rzucidlo, P. A vision-based method for supporting autonomous aircraft landing. {\em Aircraft Engineering And Aerospace Technology}. \textbf{90}, 973-982 (2018)


\end{thebibliography}

\newpage
\section{Appendix}

Our appendix is organized into five parts: 
\begin{itemize}
    \item Subsection \ref{sec:sub_def} describes complementary definitions, which supplement those presented in the main body. 
    \item Subsection \ref{sec:sub_proof} provides all the proofs for our theorems. 
    \item Subsection \ref{sec:sub_quality} provides qualitative results. 
    \item Subsection \ref{sec:sub_case} shows the details of our case study. 
    \item Subsection \ref{sec:sub_algo} includes all complementary algorithms related to this paper.
\end{itemize}

\subsection{Complementary definitions}\label{sec:sub_def}

This section starts with widely accepted concepts to provide an exhaustive formalization behind Secs.~\ref{sec:background} and ~\ref{sec:approach}. 

\begin{definition}[State function]
Given initial state $s_0$, the state of (either low- or high-dimensional) system $M$ at discrete time $t$ is  $\varphi_M(s_0, t)$, given by \emph{state function} $\varphi_M:S \times \mathbb{N}_{\geq 0} \rightarrow S$.
\end{definition}

\begin{definition}[Trajectory for \mld and \mhd]
\label{def:trajectory}
    From state $s_0$,  system \mld produces the \emph{low-dimensional trajectory} $\tld$ for $T$ steps: $\tld(s_0, T)=[s_0, \varphi_{\mld}(s_0, 1),$ $\dots,  \varphi_{\mld}(s_0, T)].$
    From state $s_0$, system \mhd produces the \emph{high-dimensional trajectory} $\thd$ for $T$ steps: $\thd(s_0, T) = [s_0,  \varphi_{\mhd}(s_0, 1), \dots,  \varphi_{\mhd}(s_0, T)].$
\end{definition}

\begin{definition}[Lipschitz Constant for LDC]
\label{def:lp_constant}
    An LDC \(\cld : S \rightarrow U\) is Lipschitz-continuous with \emph{Lipschitz constant} \(L\) on state region $\bar{S}$ if there exists \(L\geq 0\) such that:
$\forall s_1, s_2 \in \bar{S} \cdot \|\cld(s_1) - \cld(s_2)\|_2 \leq L \|s_1 - s_2\|_2$.
\end{definition}

Here we formalize the two notions of discrepancy. 

\begin{definition}[Trajectory-based discrepancy]
\label{def:real_trajectory_dis}
A \emph{trajectory-based discrepancy} $\beta(S_0)$ is the supremum of the L1 differences between high- and low-dimensional trajectories starting in initial set $S_0$ at time $t \in [0 .. T]$:  
  $$\beta(S_0) := \sup_{\forall s_0 \in S_0, t \in [0..T]}
 \|\thd(s_0,t) - \tld(s_0,t)\|_1.
 $$
\end{definition}

\begin{definition}[Action-based discrepancy]
\label{def:real_action_dis}
An \emph{action-based discrepancy} $\gamma(S_0)$ is the supremum of the control action differences $\delta$ between an HDC and LDC on the initial set $S_0$ for each time until horizon $T$: 
        $$ \gamma(S_0) := \sup_{\forall s_0 \in S_0, t \in [0..T]}
  \|\chd\big(g(\varphi_{\mhd}(s_0, t))\big) - \cld\big(\varphi_{\mld}(s_0, t)\big)\|_1.$$
\end{definition}

The definitions and theorems below straightforwardly extend Defs.~\ref{def:traj-inflated-set}, \ref{def:traj-inflated-tube}, \ref{def:action-inflate-oneldc}, \ref{def:action-inflate-tube} and Thms. \ref{thm:tb}, \ref{thm:ab} to the multi-LDC case.



\begin{definition}[Multi-LDC Trajectory-inflated reachable set] \label{def:multi-traj-inflated-set}
Given a set of distribution $\mathbf{D_0} = \{D_1,...,D_m \}$ over partitioned initial set $\mathbf{S_0} = \{S_1,..., S_m \}$ that is controlled by $\{\cld^1, \dots, \cld^{m} \}$, reachable set $\rs(S_i, t)$, and its trajectory discrepancy $\bar{\bm{\beta}} = \{ \bar{\beta}(D_1),..., \bar{\beta}(D_m)\}$, a \emph{Multi-LDC trajectory-inflated reachable set} is defined as: 
        \begin{align*}
        \irs(\mathbf{S_0}, t, \bar{\bm{\beta}} ) = \bigcup_{i=1}^{m} \irs(S_i, t, \bar{\beta}(D_i)) = \bigcup_{i=1}^{m} \big\{s \in S ~|~ \exists s' \in \rs(S_i, t) \cdot     
        \|s-s'\|_1 \leq \bar{\beta}(D_i)\big\}.
        \end{align*}

\end{definition}

\begin{definition} [Multi-LDC Trajectory-inflated reachable tube] \label{def:multi-traj-inflated-tube}
Given a set of distribution $\mathbf{D_0} = \{D_1,...,D_m \}$ over partitioned initial set $\mathbf{S_0} = \{S_1,..., S_m \}$ that is controlled by $\{\cld^1,\cld^2, \dots, \cld^{m} \}$, reachable set $\rs(S_i, t)$, and its trajectory discrepancy $\bar{\bm{\beta}} = \{\bar{\beta}(D_1),..., \bar{\beta}(D_m)\}$, a \emph{Multi-LDC trajectory-inflated reachable tube} is defined as: 

\begin{align*}
    \irt(\mathbf{S_0}, \bar{\bm{\beta}}) &= \bigcup_{i=1}^{m} \irt(S_i, \bar{\beta}(D_i)) \\
    &= \bigcup_{i=1}^{m} \left[\irs(S_i, 0, \bar{\beta}(D_i)), \irs(S_0, 1, \bar{\beta}(D_i)), \dots, \irs(S_i, T, \bar{\beta}(D_i))\right].
\end{align*}
\end{definition}

\begin{theorem}[Confident multi-LDCs trajectory-based overapproximation]\label{thm:multi-LDCs-tb} 
Consider a set of distribution $\mathbf{D_0} = \{D_1,...,D_m \}$ over split initial set $\mathbf{S_0} = \{S_1,..., S_m \}$, 
a high-dimensional system \mhd, and a set of low-dimensional systems with a set of controllers $\cld = \{\cld^1, \dots, \cld^m \}$ that approximates it with an $\alpha$-confident statistical trajectory-based discrepancy function  $\bm{\bar{\beta}} = \big\{\bar{\beta}(D_1),... \bar{\beta}(D_m)  \big\}$. Then the trajectory-inflated low-dimensional tube $\irt_{\mld}(\mathbf{S_0}, \bar{\beta}(D_0))$ contains the high-dimensional reachable tube $\rt_{\mhd}(S_0)$ with probability $1-\alpha$ over the time $T$: 
$$ \prob_{D_1...D_m}\Big[\rt_{\mhd}(S_0) \subseteq \irt_{\mld}(\mathbf{S_0}, \bm{\bar{\beta}}) \Big] \geq 1-\alpha $$ 
\end{theorem}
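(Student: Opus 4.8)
The plan is to reduce the multi-LDC claim of Theorem~\ref{thm:multi-LDCs-tb} to the single-LDC guarantee of Theorem~\ref{thm:tb}, handled cell by cell, and then recombine the per-cell guarantees using the fact that $S_1,\dots,S_m$ partition $S_0$. First I would dispatch two purely set-theoretic identities, and only afterwards do the probability bookkeeping.

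\textbf{Set-theoretic reduction.} Since $S_0=\bigcup_{i=1}^m S_i$ and, by Definition~\ref{def:reachable_set}, the (exact) reachable set distributes over unions of initial sets, we get $\rs_{\mhd}(S_0,t)=\bigcup_{i=1}^m \rs_{\mhd}(S_i,t)$ for every $t$, hence $\rt_{\mhd}(S_0)=\bigcup_{i=1}^m \rt_{\mhd}(S_i)$ componentwise. By Definitions~\ref{def:multi-traj-inflated-set} and~\ref{def:multi-traj-inflated-tube}, the multi-LDC inflated tube likewise decomposes componentwise as $\irt_{\mld}(\mathbf{S_0},\bm{\bar\beta})=\bigcup_{i=1}^m \irt_{\mld}(S_i,\bar\beta(D_i))$. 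Combining these, the event $\rt_{\mhd}(S_0)\subseteq \irt_{\mld}(\mathbf{S_0},\bm{\bar\beta})$ is implied by the conjunction of the $m$ per-cell events $\rt_{\mhd}(S_i)\subseteq \irt_{\mld}(S_i,\bar\beta(D_i))$, and each of those is exactly the event that Theorem~\ref{thm:tb} governs with initial set $S_i$, controller $\cld^i$, and discrepancy $\bar\beta(D_i)$.

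\textbf{Probability combination.} A plain union bound over the $m$ cells would only deliver confidence $1-m\alpha$, so the real content is to show why the partition structure collapses this back to $1-\alpha$. I would phrase the containment at the level of a single test trajectory: an initial state $s_0$ lies in exactly one cell $S_{i(s_0)}$, and by Theorem~\ref{thm:tb} (i.e.\ the conformal argument of Lemma~\ref{lemma:org-cp} applied to the non-conformity scores sampled from $D_{i(s_0)}$), with probability at least $1-\alpha$ we have $\max_{t=0..T}\|\varphi_{hd}(s_0,t)-\varphi_{ld}(s_0,t)\|_1\le \bar\beta(D_{i(s_0)})$, which places $\varphi_{hd}(s_0,t)$ inside $\irs(S_{i(s_0)},t,\bar\beta(D_{i(s_0)}))\subseteq \irs(\mathbf{S_0},t,\bm{\bar\beta})$ for all $t$. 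Conditioning on the (deterministic) cell index $i(s_0)$ and applying the law of total probability over the disjoint partition $\{S_1,\dots,S_m\}$ then gives overall confidence $1-\alpha$, free of any dependence on $m$, because the per-cell failure probabilities are each at most $\alpha$ and get \emph{mixed} rather than summed.

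\textbf{Main obstacle.} The step I expect to require the most care is this last combination: making explicit that the randomness used for different cells comes from independent calibration draws and is \emph{selected} by the cell membership of the test state, so that the guarantee composes as a convex mixture of $(1-\alpha)$-events instead of through a union bound, and that the statement ``$\rt_{\mhd}(S_0)\subseteq \ldots$'' is read consistently with the per-trajectory form of the discrepancy guarantee used in the statistical trajectory-based discrepancy definition. The two set identities and the invocation of Theorem~\ref{thm:tb} are routine once this is pinned down.
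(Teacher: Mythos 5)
Your proposal is correct and uses the same basic route as the paper: decompose over the partition cells, invoke Theorem~\ref{thm:tb} once per cell with controller $\cld^i$ and discrepancy $\bar{\beta}(D_i)$, and reassemble. The difference lies in the probability bookkeeping, which is precisely where the paper's proof is thinnest: the paper applies Theorem~\ref{thm:tb} to each $S_i$ and then simply asserts that the ``union'' of the per-cell containment events has probability at least $1-\alpha$, without acknowledging that literal containment of the whole tube $\rt_{\mhd}(S_0)$ requires \emph{all} per-cell events simultaneously, for which a naive union bound would only give $1-m\alpha$. You confront this head-on: you read the guarantee per test trajectory, condition on the unique cell $S_{i(s_0)}$ containing the initial state, apply the conformal bound calibrated on $D_{i(s_0)}$, and combine by the law of total probability, so the per-cell failure probabilities mix rather than sum and the confidence stays $1-\alpha$ independently of $m$. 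This buys a cleaner and more defensible statement of what the probability $\prob_{D_1\dots D_m}$ actually ranges over, at the cost of making explicit an assumption the paper leaves implicit: the distribution of the test initial state restricted to each cell must coincide with the calibration distribution $D_i$ (satisfied here since both are uniform). In short, your argument fills the gap the paper glosses over, and is the reading under which the theorem's $m$-independent $1-\alpha$ claim is actually justified; the paper's own three-line proof, taken literally, does not establish it.
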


\begin{definition} [Multi-LDC action-inflated reachable set]\label{def:action-inflate-manyldc}
Given distributions $\{D_1, ..., D_m\}$ over each region of grid $\mathbf{S_0} = \{S_1, \dots, S_{m}\}$ of initial set $S_0$, an LDC for each region $\{\cld^1,\cld^2, \dots, \cld^{m} \}$, low-dimensional reachable sets $\rs(S_i, t)$, dynamics $f$, and action-based discrepancy functions for each LDC and state region  $\bar{\bm{\gamma}} = \big\{\bar{\gamma}(D_i) \big\}_{i=1}^{m}$, low-dimensional control bounds $[u_{min}^i(t), u_{max}^i(t)] \supseteq \cld^i\big(\rs(S_i, t)\big)$, the \emph{multi-LDC action-inflated reachable set} contains a union of states reachable by inflating the actions bounds of the individual controllers:
        {\small
        \begin{align*}
            \irs(\mathbf{S_0}, t+1, 
            \bar{\bm{\gamma}}) &= \bigcup_{i=1}^{m} \irs\big(S_i, t, \bar{\gamma}(\Di\big)  \\
            = \bigcup_{i=1}^{m} \Big\{ f(s_i, u_i) \mid ~& s_i \in \rs(S_i, t), u_i \in \big[u_{min}^i(t) - \bar{\gamma}(D_i), u_{max}^i(t) + \bar{\gamma}(D_i)\big]\Big\}
        \end{align*}
        }

\end{definition}
\begin{definition} [Multi-LDC action-inflated reachable tube]
    Given an initial set $S_0$, an initial state-space grid $\mathbf{S_0} = \{S_1, \dots,S_{m}\}$, multiple LDCs $\{\cld^1,\cld^2, \dots, \cld^{m} \}$, dynamics $f$, and action-based discrepancy functions for each LDC and state region $\bar{\bm{\gamma}} = \big\{\bar{\gamma}(S_i) \big\}_{i=1}^{m}$, the \emph{multi-LDC action-inflated reachable tube} is a recursive sequence of inflated action-based reachable sets:$$ \irt(S_0, \bar{\bm{\gamma}})  = \big[ S_0, \irs_1(S_0, S_0, \bar{\bm{\gamma}}), \irs_2(S_0, \irs_1, \bar{\bm{\gamma}}) , \dots , \irs_T(S_0, \irs_{T-1}, \bar{\bm{\gamma}}) \big]. $$
\end{definition}

\begin{theorem}[Confident multi-LDCs action-based overapproximation] \label{thm:multi-ab} 
Consider an initial set $S_0$ that can be partitioned into $m$ subsets, $\mathbf{S_0} = \{ S_1,..., S_m \}$, a high-dim. system \mhd with controller \chd, and low-dimensional systems $\mld^1, \dots, \mld^m$ with respective controllers $\cld^1, \dots, \cld^m$ that approximate \chd with $\alpha$-confident statistical action-based discrepancies $\bm{\bar{\gamma}} = \big\{\bar{\gamma}(\Di) \big\}_{i=1}^{m}$. Then the action-inflated low-dimensional tube $\irt_{\mld}(S_0, \bar{\bm{\gamma}})$ contains the high-dimensional tube $\rt_{\mhd}(S_0)$ with probability $1-\alpha$ over the time $T$: 
$$ \prob_{D_1\dots D_m} \Big[\rt_{\mhd}(S_0) \subseteq \irt_{\mld}(\mathbf{S_0}, \bar{\bm{\gamma}}) \Big] \geq 1-\alpha $$ 
\end{theorem}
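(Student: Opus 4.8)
\emph{Proof strategy.} The plan is to reduce the multi-LDC statement to the single-LDC Theorem~\ref{thm:ab} applied cell-by-cell on the grid $\mathbf{S_0} = \{S_1,\dots,S_m\}$, using that both the HDC reachable tube and the multi-LDC action-inflated tube decompose along the partition, and then to combine the $m$ per-cell guarantees without incurring a factor-$m$ loss in the confidence.

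First I would unfold the definitions into a purely set-theoretic containment. Since $S_0 = \bigcup_{i=1}^m S_i$ and $\rs_{\mhd}(S_0,t) = \{\varphi_{\mhd}(s_0,t) \mid s_0 \in S_0\}$ is a pointwise image, it splits as $\rs_{\mhd}(S_0,t) = \bigcup_i \rs_{\mhd}(S_i,t)$ for every $t$, hence $\rt_{\mhd}(S_0) = \bigcup_i \rt_{\mhd}(S_i)$ component-wise. On the other side, Def.~\ref{def:action-inflate-manyldc} and the subsequent recursive tube definition give the multi-LDC inflated tube literally as $\irt_{\mld}(\mathbf{S_0},\bar{\bm{\gamma}}) = \bigcup_i \irt_{\mld^i}(S_i,\bar{\gamma}(D_i))$, and the cell-$i$ summand is exactly the single-LDC action-inflated tube of Defs.~\ref{def:action-inflate-oneldc}--\ref{def:action-inflate-tube} built from $\cld^i$ on $S_i$ (the recursion for cell $i$ propagates only $\rs(S_i,\cdot)$ through $\cld^i$ with discrepancy $\bar{\gamma}(D_i)$, never mixing in other cells). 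So it suffices to observe that if the per-cell containment $\rt_{\mhd}(S_i) \subseteq \irt_{\mld^i}(S_i,\bar{\gamma}(D_i))$ holds for every $i$, taking unions yields $\rt_{\mhd}(S_0) \subseteq \irt_{\mld}(\mathbf{S_0},\bar{\bm{\gamma}})$; i.e., the global containment event contains $\bigcap_i E_i$, where $E_i := \{\rt_{\mhd}(S_i) \subseteq \irt_{\mld^i}(S_i,\bar{\gamma}(D_i))\}$.

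Next I would bring in Theorem~\ref{thm:ab}, which gives $\prob_{D_i}[E_i] \geq 1-\alpha$ for each cell. The delicate point -- and the one I expect to be the main obstacle -- is the probability combination: a naive union bound over the $m$ cells gives only $1-m\alpha$, and even independent calibration samples give only $(1-\alpha)^m$, neither of which is the claimed $1-\alpha$. Keeping the bound at $1-\alpha$ requires not treating the cells as separate failure events but reading the guarantee through the randomness of the calibration/test draw: taking $D_i = D_0|_{S_i}$ and conditioning on which cell a fresh $s_0 \sim D_0$ falls into, the conditional coverage in that cell is $\geq 1-\alpha$ by Theorem~\ref{thm:ab}, so by the law of total probability the unconditional coverage is a convex combination $\sum_i \prob[s_0 \in S_i]\,\prob[E_i \mid s_0 \in S_i] \geq (1-\alpha)\sum_i \prob[s_0 \in S_i] = 1-\alpha$, with no dependence on $m$. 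Matching this conditioning argument to the precise meaning of ``$\prob_{D_1\dots D_m}[\rt_{\mhd}(S_0) \subseteq \irt_{\mld}(\mathbf{S_0},\bar{\bm{\gamma}})]$'' -- in exactly the way Theorem~\ref{thm:ab} itself is established for a single cell -- is the only real content of the proof.

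Finally I would note that the template is agnostic to the discrepancy notion: replacing $\bar{\gamma}$ by $\bar{\beta}$ and the action-inflated tube by the trajectory-inflated tube of Defs.~\ref{def:multi-traj-inflated-set}--\ref{def:multi-traj-inflated-tube} gives Theorem~\ref{thm:multi-LDCs-tb} verbatim, and feeding the resulting containment into the fact that any cell declared safe by Alg.~\ref{alg:overall} has its inflated tube inside $G$ yields Theorem~\ref{thm:safe}. The set-algebra steps are routine; the care goes entirely into Step~3.
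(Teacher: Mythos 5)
Your decomposition step is exactly the paper's: both proofs reduce Theorem~\ref{thm:multi-ab} to the single-LDC Theorem~\ref{thm:ab} applied cell-by-cell on $\mathbf{S_0}=\{S_1,\dots,S_m\}$ and then merge the per-cell containments by taking unions. Where you genuinely diverge is the probabilistic combination. The paper's own proof simply ``iterates each subset'' and writes the probability of a \emph{union} of the per-cell containment events being at least $1-\alpha$ before jumping to the conclusion; it never explains why combining $m$ events, each of confidence $1-\alpha$, does not cost a union-bound factor of $m$ (or a product $(1-\alpha)^m$). You correctly flag this as the crux and supply the missing argument: read the guarantee marginally over a fresh initial state $s_0\sim D_0$ with $D_i=D_0|_{S_i}$, condition on which cell $s_0$ lands in, apply the cell-wise conformal guarantee conditionally, and use the law of total probability to get a convex combination that stays at $1-\alpha$ independent of $m$ --- essentially a Mondrian/group-conditional conformal argument. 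This buys a rigorous justification for the no-factor-$m$ claim that the paper only asserts. The one caveat, which you acknowledge yourself, is that this argument establishes per-sample coverage rather than simultaneous containment of the entire tube $\rt_{\mhd}(S_0)$ as the theorem is literally phrased; but the paper's single-LDC proofs make the same identification of per-sample coverage with tube containment, so your reading is consistent with (and more explicit than) the paper's intended semantics. Your closing remarks on transferring the template to Theorem~\ref{thm:multi-LDCs-tb} and feeding it into Theorem~\ref{thm:safe} also match the paper's structure.
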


\subsection{Proofs of theorems}\label{sec:sub_proof}

This section contains theorem proofs that could not be included in the main body of the paper due to the page limit. 

\subsubsection*{Proof of Theorem 1 (Confident trajectory-based overapproximation)}

\begin{proof}
Based on Lemma 1, after calculating the conformal bound $\bar{\beta}(D_0)$ (in practice, $D_0 = \operatorname{Uniform}(S_0)$) with specified miscoverage $\alpha$ for the statistical trajectory-based discrepancy, for any new initial point $s_0$  sample uniformly and i.i.d. from $S_0$, the following holds: 
\begin{align*}
  \forall t \in [0..T] \cdot &\prob_{D_0}\Big[ \|\varphi_{hd}(s_0,t) - \varphi_{ld}(s_0,t)\|_1 \leq \bar{\beta}(D_0)\Big] \geq 1 - \alpha
\end{align*}

The low-dimensional reachable set $\rs(S_0,t)$ is guaranteed to contain all trajectories starting from the initial set $S_0$ in system $\mld$; The $\bar{\beta}(D_0)$ is the confidence upper bound of the maximum L1 norm over the whole trajectories starting from the initial set. Then, with at least $1-\alpha$ probability,
the statistical lower bound of the trajectory-inflated reachable set is: 
$$\irs_{low}(S_0, t) = \rs_{low}(S_0, t) - \bar{\beta}(D_0)$$ 
Similarly, the statistical upper bound of the trajectory-inflated reachable set is:$$\irs_{up}(S_0, t) = \rs_{up}(S_0, t) + \bar{\beta}(D_0)$$


Then if any trajectory differences starting from this initial set satisfy the specified confidence, the reachable set containing all trajectories will satisfy this confidence, too. Since the whole reachable tube contains all the reachable sets, this inflated reachable tube also satisfies this probability bound:
$$ \prob_{D_0}\Big[\rt_{\mhd}(S_0) \subseteq \irt_{\mld}(S_0, \bar{\beta}(D_0)) \Big] \geq 1-\alpha.$$
\end{proof}

\subsubsection*{Proof of Theorem 2 (Confident action-based overapproximation)}

\begin{proof}
Consider a low-dimensional reachable tube inflated with action-based discrepancy bounds. Given the statistical action-based upper bound $\bar{\gamma} = \bar{\gamma}(D_0)$ (in practice, $D_0 = \operatorname{Uniform}(S_0)$) with one LDC over the initial set $S_0$, starting from any states in $S_0$ sample uniformly, in the next $T$ time steps, control action difference between LDC and HDC is within this conformal bound with at least $1-\alpha$ confidence,
$$ \prob_{s_0 \sim \operatorname{Uniform}(S_0)}\Big[\max_{t=0..T} \|\chd\big(g(\varphi_{\mhd}(s_0, t))\big) - \cld\big(\varphi_{\mld}(s_0, t)\big)\|_1 \leq \bar{\gamma}\Big] \geq 1 - \alpha.$$

Therefore, the $(1-\alpha)$-confident statistical upper and lower bounds on the \chd output are respectively $[u_{min}^{\cld} - \bar{\gamma}(D_0), u_{max}^{\cld} + \bar{\gamma}(D_0)]$ for each time step in the whole time horizon $T$, where $u_{min}^{\cld}$ and $u_{max}^{\cld}$ are the bounds on \cld output based on the reachability analysis in Step 2 and the $\bgamma(D_0)$ has been calculated. Through the reachability analysis, we calculated each inflated $\gamma$-based reachable set given inflated control bounds in each time step. We assert that,
$$ \prob_{D_0} \Big[\forall t \in [0,T], \rs_{\mhd}(S_0, t) \subseteq \irs_{\mld}(S_0, t, \bar{\gamma}(D_0)) \Big] \geq 1-\alpha $$ 

Since the action-inflated reachable tube contains all the action-inflated reachable sets, the reachable tube of \chd also obtains the probabilistic containment in the inflated reachable tube, namely: 
$$ \prob\Big[\rt_{\mhd}(S_0) \subseteq \irt_{\mld}(S_0, \bar{\gamma}(D_0)) \Big] \geq 1-\alpha. $$ 

\end{proof}

\subsubsection*{Proof of Theorem 3 (Confident guarantee of HDC safety)} 
\begin{proof}
With the help of Thms.~\ref{thm:tb} and ~\ref{thm:ab}, given all the trained $\cld$ and their corresponding $\alpha$-confident statistical action-based discrepancy functions $\bar{\gamma} = \{\bar{\gamma}(S_i) \}_{i=1}^{n}$ or $\alpha$-confident statistical trajectory-based discrepancy function $\bar{\beta} = \{\bar{\beta}(S_i) \}_{i=1}^{n}$, both action-based and trajectory-based inflated reachable tubes contain the ground truth HDC reachable tube with a lower probability bound:
$$ \prob \Big[\rt_{\mhd}(S_0) \subseteq \irt_{\mld}(S_0, \bar{\beta}) \Big] \geq 1-\alpha. $$ 
$$
\prob \Big[\rt_{\mhd}(S_0) \subseteq \irt_{\mld}(S_0, \bar{\gamma}) \Big] \geq 1-\alpha. $$

We check the safety of the HDC by checking that $ \irs(S_0^j, T) \subseteq G)$. Since the target set and goal set are fixed, the confidence of safety depends on the confidence of the reachable tube and set starting from partitioned initial regions $\mathbf{S_{ver}} = {S_1, \cdots, S_n}$. 
Since the verification regions are independent, the probability that the HDC safety set $S_{safe}$ being contained in the ground truth safety set $S^*_{safe}$ is at least $(1-\alpha)$:
$$\prob\Big[ S_{safe} \subseteq S^*_{safe} \Big] \geq 1-\alpha $$

\end{proof}

\subsubsection*{Proof of Theorem 4 (Confident multi-LDCs trajectory-based overapproximation)}

\begin{proof}
This proof is based on Thm. \ref{thm:tb}.
Given the statistical action-based upper bound $\bm{\bar{\beta}} = \{\bar{\beta}(D_i)\}^m_{i=1}$ for LDCs ($D_i = \operatorname{Uniform}(S_i)$) over the partitioned initial set $\mathbf{S_0} = \{S_i\}_{i=1}^m$, for each subset $S_i$, Thm. \ref{thm:tb} gives us that 

$$ \prob\Big[\rt_{\mhd}(S_i) \subseteq \irt_{\mld}(S_i, \bar{\beta}(D_i)) \Big] \geq 1-\alpha. $$ 
Then we iterate each subset of $S_0$ to get the probability containment,
$$ \prob\Big[\bigcup_{i=1}^{m} \big( \rt_{\mhd}(S_i) \subseteq \irt_{\mld}(S_i, \bar{\beta}(D_i)) \big) \Big] \geq 1-\alpha. $$ 

Therefore, we add all the subregions together and conclude that 
$$ \prob_{D_1\dots D_m} \Big[\rt_{\mhd}(S_0) \subseteq \irt_{\mld}(\mathbf{S_0}, \bar{\bm{\beta}}) \Big] \geq 1-\alpha $$ 
\end{proof}

\subsubsection*{Proof of Theorem 5 (Confident multi-LDCs action-based overapproximation)}

\begin{proof}
This proof is based on Thm. \ref{thm:ab}. 
Given the statistical action-based upper bound $\bm{\bar{\gamma}} = \{\bar{\gamma}(D_i)\}^m_{i=1}$ for LDCs ($D_i = \operatorname{Uniform}(S_i)$) over the initial set $S_0 = \{S_i\}_{i=1}^m$, for each subset $S_i$, Thm. \ref{thm:ab} gives us that 

$$ \prob\Big[\rt_{\mhd}(S_i) \subseteq \irt_{\mld}(S_i, \bar{\gamma}(D_i)) \Big] \geq 1-\alpha. $$ 
And then we iterate each subset of $S_0$ to get the probability containment,
$$ \prob\Big[\bigcup_{i=1}^{m} \big( \rt_{\mhd}(S_i) \subseteq \irt_{\mld}(S_i, \bar{\gamma}(D_i)) \big) \Big] \geq 1-\alpha. $$ 

Therefore, we add all the subregions together, we can conclude that 
$$ \prob_{D_1\dots D_m} \Big[\rt_{\mhd}(S_0) \subseteq \irt_{\mld}(\mathbf{S_0}, \bar{\bm{\gamma}}) \Big] \geq 1-\alpha $$ 
\end{proof}

\subsection{Complementary qualitative illustration}\label{sec:sub_quality}

For most truly safe partitions of the initial state set, both of our approaches successfully verified safety. A successful example of the action-based approach is shown on the left of Fig.~\ref{fig:one_case}.  Given the narrow margins of safety in our benchmarks, even small overapproximation or discrepancy errors can lead to losing the safety guarantees, resulting in a false negative verification outcome. Such an example is shown on the right of Fig.~\ref{fig:one_case}: the reach set is not fully contained in the goal set (even though all simulations from the initial set are contained and therefore safe). 

These figures display only action-inflated reachable tubes: low-dimensional tubes and trajectory-inflated tubes are visually indistinguishable due to the low statistical discrepancies between the HDC and trained LDCs. Our reachable tube computations enable solving reach-avoid problems, which specify an avoid set in addition to a reach set. 

\begin{figure}[ht]
    
    \begin{subfigure}
        \centering
    \includegraphics[width=0.48\linewidth]{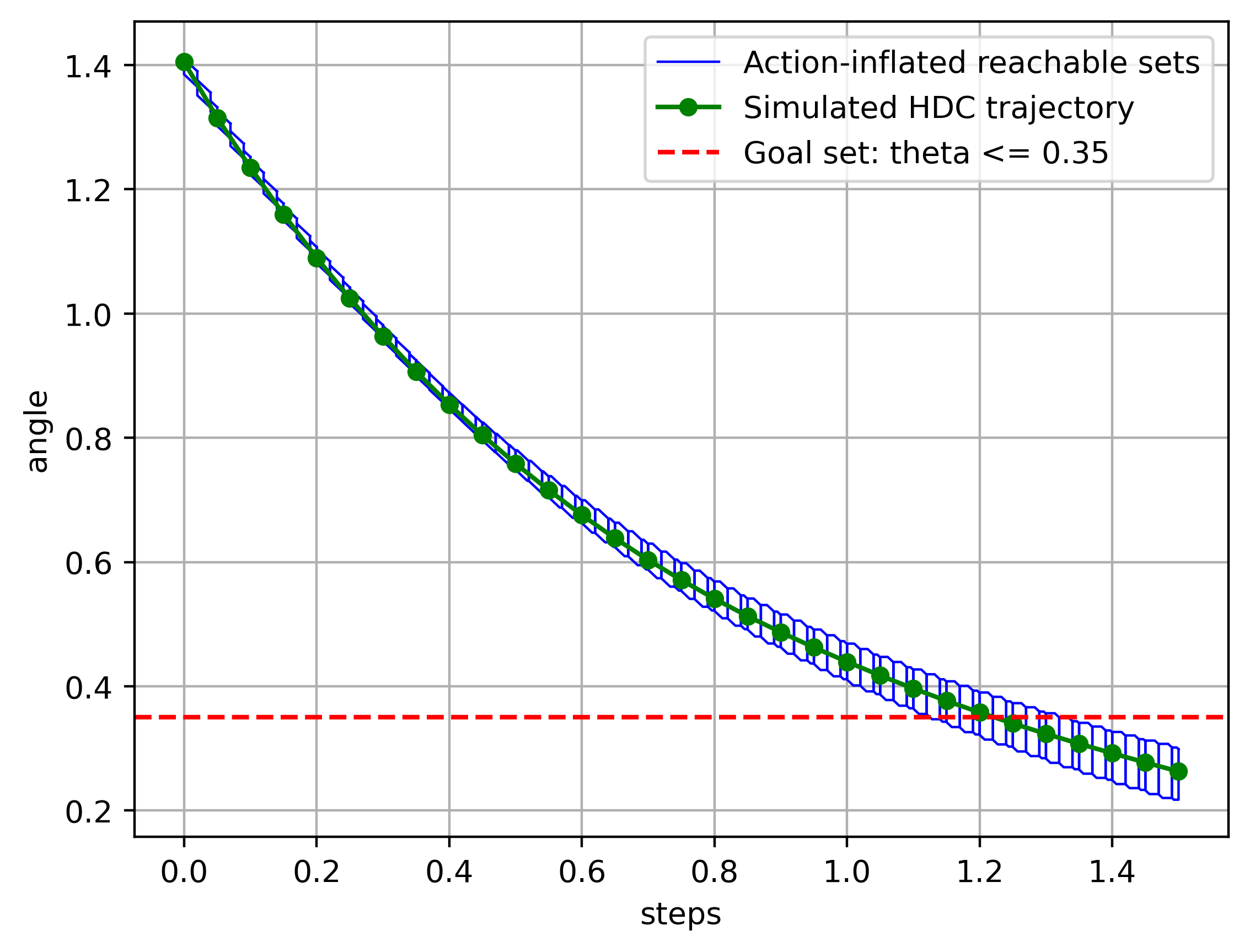}
        \label{fig:success_verify}
    \end{subfigure}
    \begin{subfigure}
        \centering
        \includegraphics[width=0.48\linewidth]{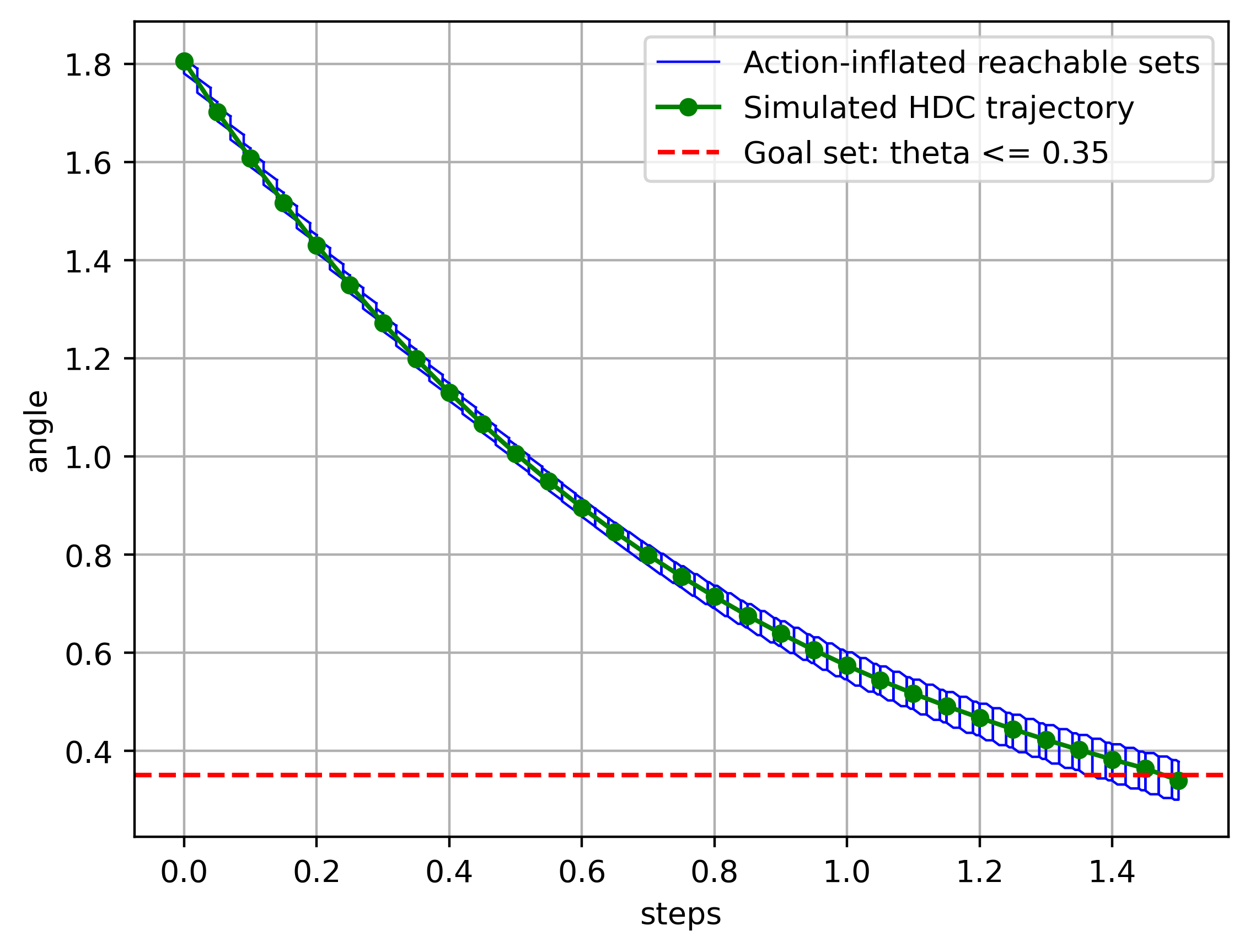}
        \label{fig:failed_verify}
    \end{subfigure} 
    \vspace{-6mm}
    \caption{Reachability examples of the action-based approach on the inverted pendulum. Left: safe, right: unsafe.}
    \label{fig:one_case}
\end{figure}


\begin{figure}[th]
\centerline{\includegraphics[width=0.9\columnwidth]{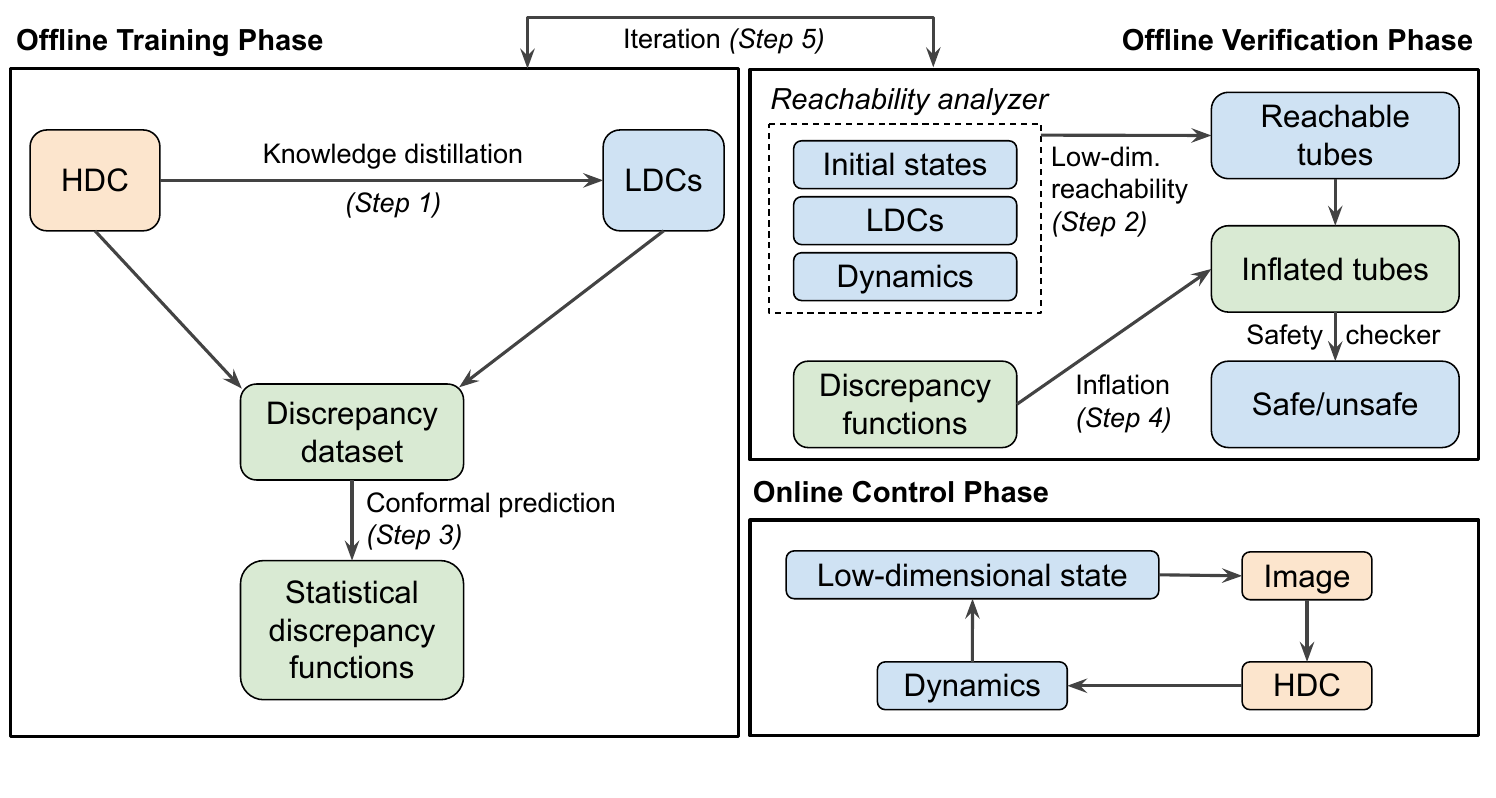}}
\vspace{-4mm}
\caption{Three phases of our approach: training, verification, and control. Orange shows high-dimensional elements, and green shows novel contributions.}
\label{fig:structure}
\end{figure}

\begin{figure}
\centerline{\includegraphics[width=0.8\columnwidth]{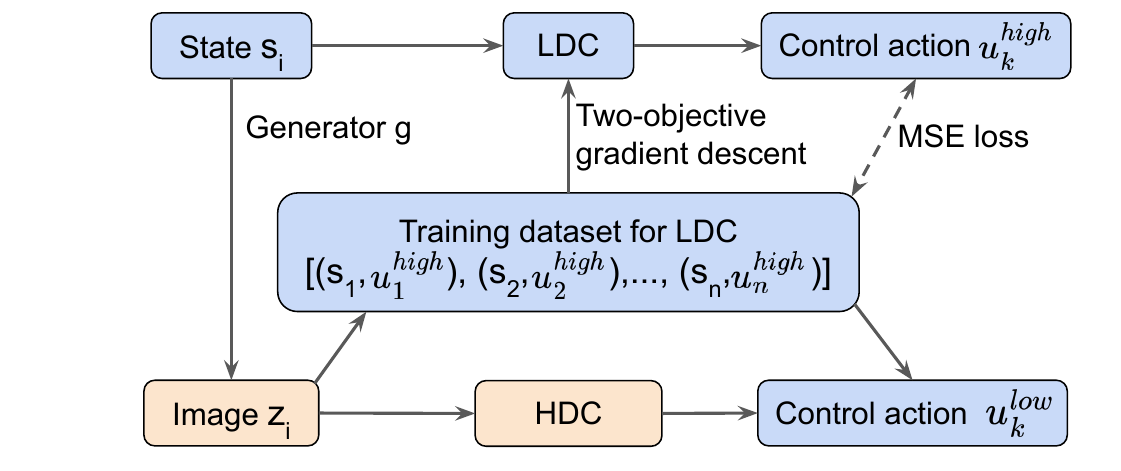}}
\vspace{-3mm}
 \caption{Training an LDC with supervision from an HDC. 
}
\label{fig:ldc-training}
\end{figure}

\subsection{Details of experimental setup}\label{sec:sub_case}


\vspace{-1mm}
\subsubsection*{Approach and initial set}

We present a detailed summary of our approach, illustrated step-by-step in Fig.~\ref{fig:structure}. Our methodology begins with matching the initial state and image, where the initial set \( S_0 \) can be construed as a quasi-inverse set \( g^{-1}(z_0) \) of an initial image \( z_0 \). This initial set \( S_0 \) is readily available to us due to our assumption that the initial conditions are sufficiently instrumented, in contrast to the subsequent states. Moreover, our approach is not confined to HDCs with solely high-dimensional inputs. For instance, in all of our case studies, the HDCs accommodate velocities alongside the images. Furthermore, our approach could be extended to HDCs that process sequences of images (e.g., to extract velocity information).

\subsubsection*{Benchmark systems} 
The equations below describe the discrete-time, friction-free, transition dynamics of the systems. We have omitted the time step $\Delta t$ for simplicity, assuming a constant time step, where the effects of the changes in all relevant variables are implicitly assumed to occur over a fixed interval.

The \ul{inverted pendulum dynamics} are as follows with the fixed parameters --- the mass of the rod ($m = 0.1 kg$), rod inertia ($I = 0.125 kg \cdot {m}^2$), rod length ($L = 0.25 m$), and damping ratio ($c = 0 
 {kg}^{-1} \cdot {m}^{-2}$): 
\vspace{-2mm}
\begin{align}
    \theta_{t+1} = \theta_t + \dot{\theta}_t, \qquad 
    \dot{\theta}_{t+1} = \dot{\theta}_{t} +  \frac{1}{I} (u_t - c\dot{\theta}_t + mgL\sin(\theta_t))
\end{align}
\vspace{-3mm}

The \ul{mountain car dynamics} are as follows with the fixed parameters --- the mass of the car ($m = 2.5 \times 10^{-4} kg$) and the force produced by the car's engine ($F = 3.75 \times 10^{-7} N$): 
\vspace{-1mm}
\begin{align}
    x_{t+1} = x_t + v_t , \qquad  v_{t+1}  = v_{t} +  \frac{F}{m} u_t - mg\cos(3x_t)
\end{align}
\vspace{-2mm}

The \ul{cartpole dynamics} are as follows with the fixed parameters --- the mass of the cart ($m_c = 1 kg$), the mass of the pole ($m_p =0.1 kg$) and the length of the pole ($l =0.5 m$): 
\vspace{-1mm}
\begin{align}
    x_{t+1} = x_t + v_t , \qquad \dot{x}_{t+1} = \dot{x}_t + \frac{u + m_pl(\dot{\theta}^2\sin{\theta}-\Ddot{\theta}\cos{\theta})}{m_c+m_p} \\ \theta_{t+1} = \theta_t + \dot{\theta}_t, \qquad \dot{\theta}_{t+1} = \dot{\theta} + \frac{g\sin{\theta} + \cos{\theta} (\frac{-u-m_pl\dot{\theta}^2\sin{\theta}}{m_c+m_p})}{l(\frac{4}{3} - \frac{m_p\cos^2{\theta}}{m_c + m_p})}
\end{align}

\vspace{-4mm}
\subsubsection*{Controller details}

Our HDCs \chd were trained in continuous action spaces with the deep deterministic policy gradient (DDPG)~\cite{ddpg}, which is an off-policy, actor-critic algorithm for deep reinforcement learning. 
For the pendulum, the \chd input is a $64 \times 64$ image and an angular velocity $\dot{\theta}$. The structure of \chd consists of convolutional layers (10 for IP, 12 for MC, 12 for CP) containing 400 neurons, fully connected layers, pooling layers, and flattened layers, along with \textit{ReLU} activation functions. for the mountain car, we have the same \chd structure and input dimensions (with velocity $v$ instead of angular velocity $\dot{\theta}$). Although these controllers perform well, they are impractical to verify directly due to their complexity. 

To approximate high-dimensional controllers, we train feedforward neural networks \cld with only low-dimensional state inputs to imitate the performance of the \chd. Step 1 in the training process of \cld is illustrated in Fig.~\ref{fig:ldc-training} and formalized in Alg.~\ref{alg:train}.
The structure of \cld is simpler to enable exhaustive verification: 2 layers with 20 neurons each and \textit{Sigmoid} and \textit{Tanh} activation functions. 

\subsubsection*{Experimental verification details}

\begin{itemize}
    \item In the IP case, the initial set we considered is $S_0^{ip} = \{ (\theta_{0}, \dot{\theta}_{0}) \in [0, 2] \times [-2, 0] \}$.  In $T=30$ steps, we checked whether the rod stays in the target set $G_{ip} = \{ \theta \in [0, 0.35] \}$.

\item In the MC case, the initial set is $S_0^{mc} = $ $\{ (x_{0}, v_{0}) \in [-0.6, -0.4] \times [-0.02, 0.05]\}$. Given $T=60$ steps, we checked whether the mountain car will stay in the target set $G^{ip} = \{ x \in [0.45, \infty] \}$. 

\item In the CP case, the initial set is $S_0^{cp} = \{(x_{0}, v_{0}, \theta_{0}, \dot{\theta}_{0}) \in [0, 0.1] \times [0, 0.1 ] \times [0.05, 0.15] \times [-0.4, -0.35] \}$. Given $T=20$ steps, we checked whether the cartpole would stay in the target set $G^{ip} = \{ \theta \times x \in [-0.2, 0.2] \times [0, 0.2] \}$.  
\end{itemize}

\subsection{Complementary algorithms}\label{sec:sub_algo}

This section provides the algorithms discussed in previous sections.
Algorithm~\ref{alg:train} refers to the first step---knowledge distillation LDC training. Algorithms~\ref{alg:trajectory-conformal} and \ref{alg:action-conformal} represent Step 3 in our end-to-end method---computation of action- and trajectory-based discrepancies. Algorithm \ref{alg:iter-train-2} is the counterpart of Algorithm \ref{alg:iter-train}.

\begin{algorithm}[h]
\caption{Training an LDC with HDC supervision}
\label{alg:train}
\begin{algorithmic}
\Function{TrainLDC}{HDC \chd, image generator $g$, initial state region $\bar{S}$, time horizon $T$, threshold for Lipschitz constant $\lambda$, threshold for MSE $\epsilon$}
\State $X_{init} \gets s_1, s_2, \dots ,s_n \sim \operatorname{Uniform}(\bar{S})$
\For{$i = 1$ to $n$}
    \State $X \gets \tau_{hd}(s_i, T)$
    \Comment{Simulate HDC system to get low-dimen. state}
    \For{$j = 1$ to $T$}
    \State $s_j \gets \varphi_{\mhd}(s_i, j)$
    \State $z_j \gets g(s_j)$
    \State $Y \gets \chd(z_i)$     \Comment{Store HDC control action dataset}
    \EndFor
\EndFor
\State {Training dataset $\mathcal{D}_{tr} \gets (X, Y)$}
\State {$\cld \gets $ two-objective gradient descent($\mathcal{D}_{tr}, \lambda, \epsilon  $) }
\State \textbf{return} \cld
\EndFunction
\end{algorithmic}
\end{algorithm}

\begin{algorithm}[h]

\caption{Computation of trajectory-based discrepancy}
\label{alg:trajectory-conformal}
\begin{algorithmic}
\Function{ComputeTrajDiscr}{LDC $\cld$, HDC $\chd$, image generator $g$, state region $\bar{S}$, confidence $\alpha$, sample count $N$, time steps $T$}
    \State 
    $s_0, s_1, ..., s_N \sim \operatorname{Uniform}(\bar{S})$
        \For{$i = 1$ to $N$}
            \State $\delta_i \gets
            \max_{t=1,...,T}\left\| \varphi_{\mhd}(s_i, t) - \varphi_{\mld}(s_i, t)\right\|_1$ \Comment{\text{non-conformity scores}}

        \EndFor
        \State $r \gets \lceil (N+1)(1-\alpha) \rceil $ \Comment{conformal quantile}
    \State \textbf{return}  $r$-th smallest value among $[\delta_{1}, \ldots, \delta_{N},\infty]$
    \EndFunction
    
\end{algorithmic}
\end{algorithm}

\begin{algorithm}[thb]
\caption{Computation of action-based discrepancy}
\label{alg:action-conformal}
\begin{algorithmic}
\Function{ComputeActionDiscr}{LDC \cld, HDC \chd, image generator $g$, state region $\bar{S}$, confidence $\alpha$, sample count $N$}
    \State 
    $s_0, s_1, ..., s_N \sim \operatorname{Uniform}(\bar{S})$
        \For{$i = 1$ to $N$}
            \State $\delta_i \gets \max_{t=0..T}\left\|\chd(g(\varphi_{\mhd}(s_0, t))) - \cld(\varphi_{\mhd}(s_0, t))\right\|_1$ \Comment{\text{non-conformity scores}}
        \EndFor
        \State $r \gets \lceil (N+1)(1-\alpha) \rceil $ \Comment{conformal quantile}
    \State \textbf{return}  $r$-th smallest value among $[\delta_{1}, \ldots, \delta_{N},\infty]$
    \EndFunction
\end{algorithmic}
\end{algorithm}


\begin{algorithm}[h]
\caption{Iterative LDC training for the trajectory-based approach}
\label{alg:iter-train-2}
\begin{algorithmic}
\Function{IterativeTrainingTB}{HDC $\chd$, image generator $g$, sample count $N$, initial state space $S_0$, confidence $\alpha$, discrepancy threshold $\xi$, time steps $T$}

\State {$\lambda, \epsilon \gets $ initial values}
\State $\mathbf{S}_0 \gets $ initial gridding of $S_0 : S_1, S_2, \dots$

\While{Computing resources last}
\For{$i = 1$ to $|\mathbf{S_0}|$}
\State $\cld^i \gets$ \textsc{TrainLDC}(\chd, g, $S_i$, $\lambda$, $\epsilon$) 
    \State $\delta^i \gets $ \textsc{ComputeTrajDiscr}($\cld^i$, \chd, $g$, $S_i$, $\alpha$, $N$, $T$)    
    \If{$\delta^i > \xi$}
        \State $\epsilon \gets \epsilon/2$
        \Comment{Reduce MSE threshold}
    \EndIf
\EndFor

\If{$\hat{\delta} > \epsilon$ in some sub-regions $\mathbf{\hat{S}} \subseteq \mathbf{S_0}$}
\State  $\mathbf{S_0} \gets \mathbf{S}'_0$ with refined re-gridding of $\hat{S}$ \Comment{Reduce Lipschitz threshold}
\EndIf

\If{$\hat{\delta} \leq \xi \land \rs_{\mld}(\hat{\mathbf{S}}, T)  \not\subseteq G$ in some sub-regions $\mathbf{\hat{S}} \subseteq  \mathbf{S_0}$}
\State  $\lambda \gets \lambda/2$ and keep the same $\epsilon$ in $\hat{\mathbf{S_0}}$
\Comment{Reduce Lipschitz threshold}
\EndIf

\State $\mathbf{S_0} \gets \mathbf{S_0}' $ \Comment{Use the updated grid}
\EndWhile

\State $\bar{\beta} \gets
{\delta^1, \delta^2, \dots}$

\State \textbf{return} $\cld^1, \cld^2, \dots ,\cld^n, \bar{\beta} $
\EndFunction
\end{algorithmic}
\end{algorithm}



\end{document}